\documentclass[prodmode,acmjacm]{acmsmall} % Aptara syntax

% Package to generate and customize Algorithm as per ACM style
\usepackage[ruled]{algorithm2e}

%\SetAlFnt{\small}
%\SetAlCapFnt{\small}
%\SetAlCapNameFnt{\small}
%\SetAlCapHSkip{0pt}
%\IncMargin{-\parindent}

% Metadata Information
% \acmVolume{0}
% \acmNumber{0}
% \acmArticle{1}
% \acmYear{0000}
% \acmMonth{0}

% place figures more aggressively

\usepackage{hyperref}

\newcommand{\eat}[1]{}
\usepackage{amsmath,amssymb,amsfonts,latexsym,url,xspace,algorithmic}
\usepackage[small]{caption}
\usepackage{float}
\usepackage{balance}
%%\usepackage{tablefootnote}
%
% The following seemed to be clashing with the ACM style files, so it is simplified in this version.
% \makeatletter
% \floatstyle{ruled}
% \newfloat{fragment}{H}{lop}
% \floatname{fragment}{Algorithm}
% \renewcommand{\floatc@ruled}[2]{\vspace{2pt}{\@fs@cfont \#1.\:} \#2 \par
%  \vspace{1pt}}
% \makeatother
\newfloat{fragment}{H}{lop}
\floatname{fragment}{Algorithm}

\usepackage{times,epsfig,ifthen}
\usepackage{latexsym}
\usepackage{amsmath}
\usepackage{amstext}
\usepackage{amsfonts}
\usepackage{amsopn}

\usepackage{ifthen}

\newcommand{\cleaned}{\mathrm{cleaned}}
\newcommand{\ignore}[1]{}
\newcommand{\rE}{{\mathbf{E}}}
\newcommand{\E}{{\rE}}

\newcommand{\sgn}{{\mathrm{ sign}}}
\newcommand{\sign}{\sgn}

\newcommand{\R}{{\bf R}}

\newcommand{\anglesep}{\theta}

%\newcommand{\mycomment}[1]{}

%\nc{\qed}{\hfill\rule{2mm}{2mm}}

% \renewenvironment{proof}{\begin{trivlist}
% \item[\hspace{\labelsep}{\bf\noindent Proof: }]
% }{\qed\end{trivlist}}
%\newenvironment{proof}{\begin{trivlist}
%\item[\hspace{\labelsep}{\bf\noindent Proof: }]
%}{\qed\end{trivlist}}
%\newenvironment{pf}[1]{\begin{trivlist}
%\item[\hspace{\labelsep}{\bf\noindent Proof of #1: }]
%}{\qed\end{trivlist}}

\newcommand{\reals}{R}

\newcommand{\mycomment}[1]{}

 %%% maybe change this???
%%\newcommand{\err}{err}

\newcommand{\err}{\mathrm{err}}

\newcommand{\rounds}{s}
\newcommand{\integers}{Z^+}

\newcommand{\poly}{\mbox{poly}}

{\begin{Sbox}\begin{minipage}{4in}\vspace*{0.2cm}}%
{\vspace*{0.2cm}\end{minipage}\hspace*{0.4cm}\end{Sbox}\fbox{\TheSbox}}

{
%\begin{alg_env}{#1}
 \rm
\begin{tabbing}
....\=...\=...\=...\=...\=  \+ \kill
}%
{\end{tabbing}
% \end{alg_env}
}

\title{The Power of Localization \\
       for Efficiently Learning Linear Separators with Noise}

\author{
Pranjal Awasthi
\affil{Rutgers University}
Maria Florina Balcan
\affil{Carnegie Mellon University}
Philip M. Long
\affil{Sentient Technologies}
}

\begin{document}

% \setcopyright{acmcopyright}
% \issn{0004-5411/2015} \acmYear{2015} \acmMonth{XXXX} \acmArticle{XXXX}

\markboth{Awasthi, Balcan and Long}{Learning Linear Separators with Noise}

\begin{abstract}
We introduce a new approach for designing computationally efficient learning algorithms that are tolerant to noise,  and
demonstrate its effectiveness by designing
algorithms with improved noise tolerance guarantees for learning linear separators.

We consider both the malicious noise model of
Valiant~\cite{Valiant85,kearns-li:93} and the adversarial label noise
model of Kearns, Schapire, and Sellie~\citeyear{KearnsSS94}. For malicious noise, where the adversary can corrupt
both the label and the features, we provide a polynomial-time
algorithm for learning linear separators in $\Re^d$ under isotropic
log-concave distributions that can tolerate a nearly
information-theoretically optimal noise rate of $\eta =
\Omega(\epsilon)$, improving on the
$\Omega\left(\frac{\epsilon^3}{\log^2(d/\epsilon)}\right)$
noise-tolerance of \cite{KLS09}.  In the case that the distribution is
uniform over the unit ball, this improves on the
$\Omega\left(\frac{\epsilon}{d^{1/4}}\right)$ noise-tolerance of
\cite{KKMS08} and the
$\Omega\left(\frac{\epsilon^2}{\log(d/\epsilon)}\right)$ of
\cite{KLS09}.
For the {\em adversarial label noise} model, where the distribution
over the feature vectors is unchanged, and the overall probability of
a noisy label is constrained to be at most $\eta$, we also give a
polynomial-time algorithm for learning linear separators in $\Re^d$
under isotropic log-concave distributions that can handle a noise rate
of $\eta = \Omega\left(\epsilon\right)$.  In the case of the uniform
distribution, this improves over the results of~\cite{KKMS08} which
either required runtime super-exponential in $1/\epsilon$ (ours is
polynomial in $1/\epsilon$) or tolerated less noise.\footnote{Our
  results also improve on the tolerable noise rate from an earlier
  version of this paper \cite{ABL14}, closing log factor gaps.}

Our algorithms are also efficient in the active learning setting,
where learning algorithms only receive
the classifications of examples when they ask for them. We show
that, in this model, our algorithms achieve a label complexity whose
dependence on the error parameter $\epsilon$ is polylogarithmic
(and thus exponentially better than that of any passive algorithm).
This provides the first polynomial-time
active learning algorithm for learning linear
separators in the presence of malicious noise or adversarial label noise.

Our algorithms and analysis combine several ingredients including aggressive localization, minimization of a progressively rescaled hinge loss,
and a novel localized and soft outlier removal procedure.
We use localization techniques (previously used for obtaining  better sample complexity results) in order to obtain better noise-tolerant polynomial-time algorithms.
\end{abstract}

\category{F.2}{Theory of Computation}{Analysis of Algorithms and Problem Complexity}{}

% \terms{}
\terms{Algorithms,Theory}

% \keywords{learning theory,noise-tolerant learning,active learning,linear classification,malicious noise,agnostic learning,localization}
% \keywords{}

\acmformat{Pranjal Awasthi, Maria Florina Balcan and Philip M. Long, 2016. The Power of Localization for Efficiently Learning Linear Separators with Noise.}
% \acmformat{}

% At a minimum you need to supply the author names, year and a title.
% IMPORTANT:
% Full first names whenever they are known, surname last, followed by a period.
% In the case of two authors, 'and' is placed between them.
% In the case of three or more authors, the serial comma is used, that is, all author names
% except the last one but including the penultimate author's name are followed by a comma,
% and then 'and' is placed before the final author's name.
% If only first and middle initials are known, then each initial
% is followed by a period and they are separated by a space.
% The remaining information (journal title, volume, article number, date, etc.) is 'auto-generated'.

\begin{bottomstuff}
%We thank Steve Hanneke for helpful communications.  We also thank
%anonymous reviewers for their helpful comments.
%This work was supported in part by NSF grants CCF-0953192,
%CCF-1101283, and CCF-1422910, AFOSR grant FA9550-09-1-0538, ONR grant N00014-09-1-0751,
%and a Microsoft Research Faculty Fellowship.
%
Author's email addresses: P. Awasthi, {pranjal.awasthi@rutgers.edu};
M. F. Balcan, {ninamf@cs.cmu.edu}; P. M. Long, {phil.long@sentient.ai}.
\end{bottomstuff}

\maketitle

\section{Introduction}
\noindent{\bf Overview.}~~Dealing with noisy data is one of the main challenges in machine
learning and is an
active area of research.  In this work we
study the noise-tolerant learning of linear separators, arguably the most
popular class of functions used in practice~\cite{Cristianini00}.
% Linear separators
% are at the heart of methods ranging from support vector machines
% (SVMs) to logistic regression to deep networks, and their learnability
% has been the subject of intense study for over 50 years.
Learning
linear separators from correctly labeled (non-noisy) examples is a
very well understood problem with simple efficient algorithms
% like
% Perceptron
% being
that are effective both in the classical passive learning
setting~\cite{KearnsVazirani:94,vapnik:98} and in the more modern active learning framework~\cite{sanjoy11-encyc}.
However, for noisy settings, except for the special case of
uniform random noise, very few positive algorithmic results exist even
for passive learning. In the context of theoretical computer science more broadly, problems of noisy learning are related to seminal results in approximation-hardness~\cite{ABSS97,GR06},
cryptographic assumptions~\cite{BFKL93,Reg05}, and are connected
to other classical questions in learning theory (e.g., learning DNF formulas~\cite{KearnsSS94}), and appear as
barriers in differential privacy ~\cite{Gupta11}.
%Moreover, learning from
%noise can model the very natural setting in which a target concept is
%just ``close'' to being a linear separator.
%  Hence, not surprisingly,
% designing efficient algorithms for learning linear separators in the
% presence of adversarial noise (see definitions below) is of great
% importance.

In this paper we present new techniques for designing efficient algorithms for learning linear separators in the presence of {\em malicious noise} and
{\em adversarial label noise}.
% These are two of the most challenging noise models studied in learning theory.
These models were originally proposed for a setting in which the
algorithm must work for an arbitrary, unknown distribution.  As we
will see, bounds on the amount of noise tolerated for this
distribution-free setting were weak, and no significant progress was
made for many years.  This motivated research investigating the role
of the distribution generating the data on the tolerable level of
noise: a breakthrough result of~\cite{KKMS08} and subsequent work
of~\cite{KLS09} showed that indeed better bounds can be obtained for
the uniform and isotropic log-concave distributions.  In this paper,
we continue this line of research.  For the malicious noise case,
where the adversary can corrupt both the label and the features
of the observation (and it has unbounded computational power and
access to the entire history of the learning algorithm's computation),
we design an efficient algorithm that can learn with accuracy
$1-\epsilon$ while tolerating an $\Omega(\epsilon)$ noise rate.
This is within a constant factor of the statistical
limit even in the case of the uniform distribution. In particular,
unlike previous works, our noise tolerance limit has no dependence on
the dimension $d$ of the space. We also show similar improvements for
adversarial label noise, and furthermore show that our algorithms can
naturally exploit the power of active learning. Active learning is a
widely studied modern learning paradigm, where the learning algorithm
only receives the class labels of examples when it asks for them. We
show that in this model, our algorithms achieve a label complexity
whose dependence on the error parameter $\epsilon$ is exponentially
better than that of any passive algorithm. This provides the first
polynomial-time active learning algorithm for learning linear
separators in the presence of adversarial label noise, solving an open
problem posed in~\cite{Balcan06,Monteleoni06}. It also provides the
first analysis showing the benefits of active learning over passive
learning under the challenging malicious noise model.
%In addition, all our algorithms can naturally be extended to the active learning setting . This leads to the first construction of {\em active} noise-tolerant polynomial-time algorithms in these noise models as well.

Our work brings a new set of algorithmic and analysis techniques including localization
(previously used for obtaining  better sample complexity results)
and soft outlier removal that we believe will have other applications in learning theory and optimization.
Localization \cite{BBM05,bbl05,Zha06,Balcan07,BLL09,Kol10,hanneke:11,BL13}
refers to the practice of progressively
narrowing the focus of a learning algorithm to an increasingly restricted
range of possibilities (which are known to be safe given the information
up to a certain point in time), thereby improving the stability
of estimates of the quality of
these possibilities based on random data.
% (Note that, while the examples in
% the band in round $k$ do not occupy a neighborhood in feature space,
% they concern differences between hypotheses in a neighborhood around
% $w_{k-1}$.)

In the following we start by formally defining the learning models we consider.
We then present the most relevant prior work, and then our main results and techniques.

\smallskip
\noindent{\bf Passive and Active Learning. Noise Models.}~~
In this work we consider the problem of learning  linear separators in two learning paradigms: the classical passive learning setting and the
more modern active learning scenario.
As is typical~\cite{KearnsVazirani:94,vapnik:98}, we assume that there exists a distribution  $D$ over $\Re^d$ and a fixed unknown target function whose
parameter vector is $w^*$.
In the noise-free case, in the {\em passive supervised learning} model the
algorithm is given access to a distribution oracle $EX(D,w^*)$ from
which it can get training samples $(x,\sgn(w^* \cdot x))$ where $x \sim D$. The
goal of the algorithm is to output a hypothesis $w$ such that
$err_D(w) = \Pr_{x \sim D}[ \sgn(w^* \cdot x)\ne \sgn(w \cdot x)] \le \epsilon$.
% I think that the above can be interpreted as a definition of err_D for the
% general case.
%
% For any other unit vector $w$, denote $err_D(w) = \Pr_{x \sim D}
% [\sgn(w \cdot x) \ne \sgn(w^* \cdot x)]$.
%%
 In the active learning model~\cite{Cohn94,sanjoy11-encyc} the learning algorithm is
 given as input a pool of unlabeled examples drawn from the
 distribution oracle. The algorithm can then query for the labels of
 examples of its choice from the pool. The goal is to produce a
 hypothesis of low error while also optimizing for the number of
 label queries~(also known as {\em label complexity}).
 The hope is that in the active learning setting we can output a classifier
of small error by using many fewer label requests than in the passive learning setting by actively
directing the queries to informative examples (while keeping the number of unlabeled examples
polynomial).

In this work we focus on two noise models. The first one is the malicious noise model of~\cite{Valiant85,kearns-li:93} where samples are generated as follows:
with probability $(1-\eta)$ a random pair $(x,y)$ is output where $x \sim D$
and $y = \sgn(w^* \cdot x)$;
with probability $\eta$ the adversary can output an arbitrary pair
$(x,y) \in \Re^d \times \{ -1, 1 \}$.
We will call $\eta$ the noise rate.  Each of
the adversary's examples
can depend on the state of the learning algorithm and also the previous draws of the adversary. We will denote the malicious oracle as $EX_{\eta}(D,w^*)$.
The goal  remains, however,
% that of achieving arbitrarily good predictive approximation to the  underlying target function with respect to the underlying distribution, that is
to output a hypothesis $w$ such that $\Pr_{x \sim D}[ \sgn(w^* \cdot
x)\ne \sgn(w \cdot x)] \le \epsilon$.

In this paper, we consider an extension of the malicious noise model
to the
the active learning model as follows.  There are two oracles, an example
generation oracle and a label revealing oracle.
The example
generation oracle works as usual in the malicious noise model:
with probability $(1-\eta)$ a random pair $(x,y)$ is generated where $x \sim D$
and $y = \sgn(w^* \cdot x)$;
with probability $\eta$ the adversary can output an arbitrary pair
$(x,y) \in \Re^d \times \{ -1, 1 \}$.
In the active learning setting, unlike the standard malicious noise model,
when an example $(x,y)$ is generated, the algorithm
only receives $x$, and must make
a separate call to the label revealing oracle to get $y$.
The goal of the algorithm is still to output a
hypothesis $w$ such that $\Pr_{x \sim D}[ \sgn(w^* \cdot x)\ne \sgn(w \cdot x)] \le \epsilon$.

%%\subsection{Adversarial label noise}
In the adversarial label noise model, before any examples are generated,
the adversary may choose a joint distribution $P$ over
$\Re^d \times \{ -1, 1 \}$ whose marginal distribution over $\Re^d$
is $D$ and such that
$\Pr_{(x,y) \sim P} (\sign(w^* \cdot x) \neq y) \leq \eta$.
In the active learning version of this model, once again we will have
two oracles, an example
generation oracle and a label revealing oracle.
We note that the results from our theorems in this model translate immediately into similar guarantees for the agnostic model of~\cite{KearnsSS94} (used commonly both in passive and active learning (e.g.,~\cite{KKMS08,Balcan06,Hanneke07}) -- see Appendix~\ref{a:agnostic} for details.

We will be interested in algorithms that run in time $poly(d,1/\epsilon)$ and use $poly(d,1/\epsilon)$ examples. In addition, for the active learning scenario we want our algorithms to also optimize for the number of label requests. In particular, we want the number of labeled examples to depend only polylogarithmically in $1/\epsilon$. The goal then is to quantify for a given value of $\epsilon$, the tolerable noise rate $\eta(\epsilon)$ which would allow us to design an efficient (passive or active) learning algorithm.
%\footnote{The results from our theorems also translate into similar guarantees for the agnostic model of~\cite{KearnsSS94}. See Appendix~\ref{a:agnsotic} for %details.}.

\smallskip
\noindent{\bf Previous Work.}
In the context of passive learning, Kearns and Li's analysis
\citeyear{kearns-li:93} implies that halfspaces
can be efficiently learned with respect to arbitrary distributions in
polynomial time while tolerating a malicious noise rate of
$\tilde{\Omega}\left(\frac{\epsilon}{d} \right)$.
Kearns and Li \citeyear{kearns-li:93} also showed that
malicious noise at a rate greater than $\frac{\epsilon}{1 + \epsilon}$
cannot be tolerated (and a slight variant of their construction
shows that this remains true even
when the distribution is uniform over the unit sphere).
The $\tilde{\Omega}\left(\frac{\epsilon}{d} \right)$ bound for
the distribution-free case was not improved for many years.
Kalai et
al.~\citeyear{KKMS08} showed that,\footnote{These results from
\cite{KKMS08} are most closely related to our work.  We describe
some of their other results, more prominently featured in their
paper, later.}
when
the distribution is uniform,
the $\poly(d,1/\epsilon)$-time averaging
algorithm tolerates malicious noise at a rate $\Omega(\epsilon/\sqrt{d})$. They also described an improvement to
$\tilde{\Omega}(\epsilon/{d^{1/4}})$ based on the observation that
uniform examples will tend to be well-separated, so that pairs of
examples that are too close to one another can be removed, and this
limits an adversary's ability to coordinate the effects of its noisy
examples.~\cite{KLS09} analyzed another approach to limiting the
coordination of the noisy examples: they proposed an outlier removal
procedure that used PCA to find any direction $u$ onto
which projecting the training data led to suspiciously high variance,
and removing examples with the most extreme values after projecting
onto any such $u$.  Their algorithm tolerates malicious noise at a rate
$\Omega(\epsilon^2/\log(d/\epsilon))$ under the uniform distribution.

Motivated by the fact that many modern machine learning applications
%%(e.g., video classification and protein sequence classification)
have massive amounts of unannotated or unlabeled data, there has been significant interest in designing active learning algorithms that most efficiently utilize the available data, while minimizing the need for human intervention.
Over the past decade there has been substantial progress on understanding the underlying statistical principles of active learning, %%~\cite{QBC,sanjoy-coarse,Balcan06,Balcan07,Hanneke07,dhsm,CN07,BHW08,Kol10,nips10,wang11,sanjoy11-encyc,RaginskyR:11,BH12}.
and several general characterizations have been developed for
describing when active learning could have an advantage over the classical passive supervised learning paradigm both in the noise free settings and in the agnostic case~\cite{QBC,sanjoy-coarse,Balcan06,Balcan07,Hanneke07,dhsm,CN07,BHW08,Kol10,nips10,wang11,sanjoy11-encyc,RaginskyR:11,BH12,hanneke:survey}.
%~\cite{Balcan06,Balcan07,CN07,Hanneke07,Kol10,dhsm,hanneke:11,wang11}.
However, %%most of
despite many efforts, except for very simple noise models (random classification noise~\cite{BF13} and linear noise~\cite{dgs12}),
to date there are no known computationally efficient algorithms with provable guarantees in the presence of noise. In particular, there are no computationally efficient algorithms for
the agnostic case, and  furthermore no result exists showing the benefits of active learning over passive learning in the malicious noise model, where the
% feature part of the examples can be corrupted as well.
adversary may also corrupt the features.

We discuss additional related work in Appendix~\ref{se:related}.

\subsection{Our Results}
The following are our main results.

\begin{theorem}
\label{thm:log-concave-agnostic}
% There is a polynomial-time algorithm
% $A_1$ for learning linear
% separators with respect to isotropic log-concave distribution
% in $\Re^d$ in the presence of adversarial label noise such that an
% $\Omega\left( \epsilon \right)$ upper bound on $\eta$ suffices to imply
% that for any $\epsilon, \delta>0$, the output $w$ of $A_1$
% satisfies $\Pr_{(x,y) \sim D} [\sgn(w\cdot x) \neq \sgn(w^*\cdot x)] \leq \epsilon$
% with probability at least $1-\delta$.
There is a polynomial-time algorithm
$A_1$ for learning linear
separators with respect to isotropic log-concave distributions
in $\Re^d$ in the presence of adversarial label noise, and positive constants
$C$ and $\epsilon_0$ such that, for all $0 < \epsilon < \epsilon_0$,
and all $\delta > 0$,
if $\eta < C \epsilon$,
then the output $w$ of $A_1$
satisfies $\Pr_{(x,y) \sim D} [\sgn(w\cdot x) \neq \sgn(w^*\cdot x)] \leq \epsilon$
with probability at least $1-\delta$.

Further, $A_1$ uses at most poly($d$, $\log(1/\epsilon)$, $\log(1/\delta)$)
labeled examples.
\end{theorem}

\begin{theorem}
\label{thm:log-concave-malicious}
There is a polynomial-time algorithm
$A_2$ for learning linear
separators with respect to isotropic log-concave distributions
in $\Re^d$ in the presence of malicious noise, and positive constants
$C$ and $\epsilon_0$ such that, for all $0 < \epsilon < \epsilon_0$,
and all $\delta > 0$,
if $\eta < C \epsilon$,
then the output $w$ of $A_2$
satisfies $\Pr_{(x,y) \sim D} [\sgn(w\cdot x) \neq \sgn(w^*\cdot x)] \leq \epsilon$
with probability at least $1-\delta$.

$A_2$ uses at most poly($d$, $\log(1/\epsilon)$, $\log(1/\delta)$)
labeled examples.
\end{theorem}

As a restatement of Theorem~\ref{thm:log-concave-agnostic}, in the agnostic setting
considered in \cite{KKMS08}, we can output a halfspace of error at
most $O(\eta + \alpha)$ in time $\poly(d,1/\alpha)$.  In the case of
the uniform distribution, Kalai, et al, achieved error $\eta + \alpha$
by learning a low degree polynomial in time whose dependence on the
inverse accuracy is super-exponential.  On the other hand, this result
of \cite{KKMS08} applies when the target halfspace does not necessarily
go through the origin.

Our algorithms naturally exploit the power of active learning.
(Indeed, as we will see, an active learning algorithm proposed in
\cite{Balcan07} provided the springboard for our work.)
We show
that in this model, the label complexity of both algorithms is
polylogarithmic in $1/\epsilon$.
% (If the distribution is uniform,
% the label complexity is $O\left\left(\frac{1}{\epsilon}\right)$.)
%
% The above isn't quite true.
Our efficient algorithm that tolerates adversarial label
noise solves an open problem posed in~\cite{Balcan06,Monteleoni06}.
Furthermore, our paper provides the first active learning algorithm
for learning linear separators in the presence of non-trivial amount
of adversarial noise that can affect not only the label, but also
the
features.

Our work exploits the power of localization
for designing noise-tolerant polynomial-time algorithms.
Such localization techniques have been used for
analyzing sample complexity for  passive learning
(see \cite{BBM05,bbl05,Zha06,BLL09,BL13})
or for designing active learning algorithms
(see \cite{Balcan07,Kol10,hanneke:11,BL13}).
Ideas useful for making such a localization strategy computationally efficient,
and tolerating malicious noise, are described in Section~\ref{se:tech}.

We note that all our algorithms are proper in that they return a
linear separator.  (Linear models can be evaluated efficiently, and
are otherwise easy to work with.)  We summarize our results, and the
most closely related previous work, in
Tables~\ref{table:comparison-uniform} and~\ref{table:log-concave}.

\begin{table}[ht]
\caption{Comparison with previous $\mathrm{poly}(d,1/\epsilon)$-time
         algs. for uniform distribution}
\centering
\begin{tabular}{|c|c|c|}
\hline
\textbf{Passive Learning} & Prior work & Our work\\
 \hline
malicious & $\eta = \Omega\left(\frac {\epsilon} {d^{1/4}}\right)$~\cite{KKMS08} & $\eta = \Omega\left(\epsilon\right)$\\
& $\eta = \Omega\left(\frac {\epsilon^2} {\log\left(d/\epsilon\right) }\right)$~\cite{KLS09} & \\
 \hline
 adversarial & $\eta = \Omega\left(\frac{\epsilon}{\sqrt{\log\left(1/\epsilon\right)}}\right)$
            ~\cite{KKMS08} & $\eta = \Omega\left(\epsilon\right)$\\
%             & $\eta = \Omega\left(\epsilon\right)$~\cite{KKMS08}\tablefootnote{\label{note1}runtime exp. in $1/\epsilon$} &\\
\hline
%\hline
\textbf{Active Learning} & NA & $\eta = \Omega\left(\epsilon\right)$\\
 (malicious and adversarial) & & \\ \hline
\end{tabular}
\label{table:comparison-uniform}
\end{table}
\begin{table}[ht]
\caption{Comparison with previous $\mathrm{poly}\left(d,1/\epsilon\right)$-time
         algorithms isotropic log-concave distributions}
\centering
\begin{tabular}{|c|c|c|}
\hline
\textbf{Passive Learning} & Prior work & Our work\\
 \hline
malicious & $\eta = \Omega\left(\frac {\epsilon^3} {\log^2\left(d/\epsilon\right) }\right)$~\cite{KLS09} & $\eta = \Omega\left(\epsilon \right)$\\
 \hline
%  adversarial & $\eta = \Omega\left(\epsilon\right)$~\cite{KKMS08}\footnotemark[\ref{note1}] & $\Omega\left(\epsilon \right)$\\
adversarial &  $\eta = \Omega\left(\frac {\epsilon^3} {\log\left(1/\epsilon\right) }\right)$~\cite{KLS09} & $\eta = \Omega\left(\epsilon \right)$\\
\hline
%\hline
\textbf{Active Learning} & NA & $\eta = \Omega\left(\epsilon \right)$\\
 (malicious and adversarial) & & \\ \hline
\end{tabular}
\label{table:log-concave}
\end{table}

%We also improve on results from \cite{KKMS08,KLS09} on learning
%with respect to isotropic logconcave distributions, either by
%improving the noise rate that can be tolerated, or by presenting
%a $\poly(d,1/\epsilon)$-time algorithm.

\subsection{Techniques}
\label{se:tech}

\smallskip
\noindent{\bf Hinge Loss Minimization}
As minimizing the 0-1 loss in the presence of noise is NP-hard~\cite{Johnson78,Garey90},
a natural approach is to minimize a surrogate convex loss that acts as a proxy for the 0-1 loss.
A common choice in machine learning is to use the hinge loss:
% \[
% \ell_{\tau}(w,x,y) = \max\left(0,1 - \frac{y(w\cdot x)} \tau\right),
% \]
$
\max\left(0,1 - y(w\cdot x) \right).
$
In this paper, we use the slightly more general
$
\ell_{\tau}(w,x,y) = \max\left(0,1 - \frac{y(w\cdot x)} \tau\right),
$
and, for a set $T$ of examples, we let
$
\ell_{\tau}(w,T) = \frac{1}{|T|} \sum_{(x,y) \in T} \ell_{\tau}(w,x,y).
$
Here $\tau$ is a parameter that changes during training.
It can be shown that minimizing hinge loss with an appropriate normalization factor
can tolerate a noise rate of $\Omega(\epsilon^2/\sqrt{d})$ under isotropic log-concave distributions in $\Re^d$. This is also the limit for such a strategy since a more powerful malicious adversary can concentrate all the noise directly opposite to the target vector $w^*$ and make sure that the hinge-loss is no longer a faithful proxy for the 0-1 loss.
%{\tt EXPAND}

\smallskip
\noindent{\bf Localization in the instance and concept space}~~~
Our first key insight is that by using an iterative localization technique, we can limit the harm caused by an adversary at each stage and hence can still do hinge-loss minimization despite significantly more noise.
In particular, the iterative algorithm we propose
proceeds
in stages and at stage $k$, we have a hypothesis vector $w_k$ of a
certain error rate.
The goal in stage $k$ is to produce a new vector $w_{k+1}$ with error
rate a constant factor smaller than $w_k$'s.  In order to reduce the
error rate, we focus on a band of size $b_{k} = e^{-c k}$ around the
boundary of the linear classifier whose normal vector is $w_k$,
i.e. $S_{w_k, b_{k}} = \{x: |w_{k} \cdot x| < b_{k}\}$.  For the rest
of the paper, we will repeatedly refer to this key region of
borderline examples as ``the band''.  The key observation made
in~\cite{Balcan07} is that outside the band, all the classifiers still
under consideration (namely those hypotheses within radius $r_k$ of
the previous weight vector $w_{k}$) will have very small
error. Furthermore, the probability mass of this band under the
original distribution is small enough, so that in order to make the
desired progress we only need to find a hypothesis of constant error
rate over the data distribution conditioned on being within margin
$b_{k}$ of $w_{k}$. This idea was used in~\cite{Balcan07} to obtain
active learning algorithms with improved label complexity ignoring
computational complexity considerations\footnote{We note that the
  localization considered by~\cite{Balcan07} is a more aggressive one
  than those considered in disagreement based active learning
  literature~\cite{Balcan06,Hanneke07,Kol10,hanneke:11,wang11} and
  earlier in passive learning~\cite{BBM05,bbl05,Zha06}.}.

In this work, we build on this idea to produce polynomial time algorithms with improved noise tolerance.
% Not only do we use this localization idea for different purposes, but our analysis significantly departs from~\cite{Balcan07}.
To obtain our results, we exploit several new ideas: (1) the performance of the rescaled hinge loss minimization in smaller and smaller bands,
% (2) a analysis of the variance of the data that takes account of the benefits of localization, and
(2) an analysis of properties of the distribution obtained after
conditioning on the band that enables us to more sensitively identify
cases in which the adversary concentrates the effects of noisy examples,
(3) another type of localization  --- a novel soft outlier removal procedure.

We first show that if we minimize a variant of the hinge loss that is
rescaled depending on the width of the band, it remains a faithful
enough proxy for the 0-1 error even when there is significantly more
noise.  As a first step towards this goal, consider the setting where
we pick $\tau_k$ proportionally to $b_k$, the size of the band, and
$r_k$ is proportional to the error rate of $w_{k}$, and then minimize
a normalized hinge loss function $\ell_{\tau_k}(w,x,y) = \max (0, 1 -
\frac{y(w \cdot x)}{\tau_k})$ over vectors $w$ in $B(w_{k},r_k)$, the
ball of radius $r_k$ centered at $w_k$.  We first show that $w^*$ has
small hinge loss within the band. Furthermore, within the band the
adversarial examples cannot hurt the hinge loss of $w^*$ by a lot. To
see this notice that if the malicious noise rate is $\eta$, within
$S_{w_{k-1},b_k}$ the effective noise rate is ${O}(\eta/b_k)$.  Also,
with high probability, the hinge loss for vectors $w \in B(w_{k},r_k)$
is at most $\tilde{O}(\sqrt{d})$.  Hence the maximum amount by which
the adversary can affect the hinge loss is $\tilde{O}(\eta
\sqrt{d}/b_k)$. Using this approach we get a noise tolerance of
$\tilde{\Omega}(\epsilon/\sqrt{d})$.

In order to get better tolerance in the adversarial, or agnostic, setting, we note that examples $x$ for which $|w \cdot x|$ is large for $w$ close to
$w_{k-1}$ are the most harmful, and, by analyzing the variance of
$w \cdot x$ for such directions $w$, we can more effectively limit the amount by which an adversary can ``hurt'' the hinge loss. This  then leads to an improved noise tolerance of $\Omega(\epsilon)$.

% For the case of malicious noise, in addition we need to deal with the presence of outliers, i.e.\ points not generated from the underlying marginal distribution.
Our algorithm that tolerates adversarial label noise does not work for the
malicious noise model: it can be foiled by an algorithm that concentrates
$\eta$ measure on an incorrectly labeled example within $\Theta(\epsilon)$
of the separating hyperplane of the target, but with a very large norm.
If the norm of this noisy example is large enough, its hinge loss
can overwhelm the hinge losses of clean examples.
We cope with this using a {\em soft localized outlier removal} procedure at each stage (described next). This procedure assigns a weight to each data point indicating
the algorithm's confidence that the point is not
``noisy''. We then minimize the weighted hinge loss. Combining this with the variance analysis mentioned above leads to a noise of tolerance of $\Omega(\epsilon)$ in the malicious case.

\smallskip
\noindent{\bf Soft Localized Outlier Removal}
% Outlier removal techniques have been studied before in the context of learning problems \cite{BlumFKV:97,KLS09}.
Outlier removal has been used for learning linear classifiers before \cite{BlumFKV:97,KLS09}.
In \cite{KLS09}, the goal of
outlier removal was to limit the ability of the adversary to coordinate
the effects of noisy examples -- excessive such coordination was
detected and removed. Our outlier removal procedure~(Algorithm~\ref{fig:outlier-removal}) is similar in spirit to that of \cite{KLS09} with two key differences.
First, as in \cite{KLS09}, we will use the
variance of the examples in a particular direction to measure their
coordination.  However, due to the fact that in round $k$, we are minimizing the hinge loss only with
respect to vectors that are close to $w_{k-1}$, we only need to limit the
variance in these directions.
As training proceeds, the band is increasingly shaped
like a pancake, with $w_{k-1}$ pointing in its flattest direction.   Hypotheses
that are close to $w_{k-1}$ also point in flat directions;
the variance in those directions is $\Theta(b^2_k)$ which is much smaller
than variance found in a generic direction.
This allows us to limit the harm of the adversary to a greater extent than was
possible in the analysis of \cite{KLS09}.
The second difference is that, unlike previous outlier removal techniques,
rather than making discrete remove-or-not decisions, we instead weigh the
examples and then minimize the weighted hinge loss. Each weight indicates the algorithm's
confidence that an example is not noisy.
We show that these weights can be computed by solving a linear program with infinitely many constraints. We then show how to design an efficient separation oracle for the linear program using recent general-purpose optimization techniques
\cite{SZ03,BM14}.

\subsection{Recent developments}
%In this section we describe work subsequent to the initial conference publication of
%our  results\cite{ABL14} that  has  gone  on  to  further  expand  understanding
%of the algorithmic implications of our localization techniques.
Subsequent to the publication of this work in preliminary form
\cite{ABL14}, Daniely \citeyear{Dan15} combined the techniques of
this paper with the polynomial-separation technique of \cite{KKMS08}
to achieve a PTAS for agnostic learning of halfspaces with
respect to the uniform distribution.  (Recall that agnostic
learning is essentially equivalent to learning with
adversarial label noise, as outlined in Appendix~\ref{a:agnostic}.)
Awasthi et al.~\citeyear{ABHU15}
provided  efficient (active and passive) learning algorithms for learning linear separators in
the presence of (sufficiently benign) bounded noise (a.k. a. Massart noise)\footnote{The Massart noise is widely studied in statistical learning theory(see e.g.~\cite{bbl05}) and can be thought of as a realistic generalization of the random classification noise, where where  the  label  of  each
example $x$ is flipped independently with constant probability $\eta(x) < 1/2$.} to arbitrarily small excess error  under the uniform distribution over
the unit sphere in $R^d$.  
Awasthi et al.~\citeyear{ABHZ16} improved on this algorithm (to allow for any constant bounded noise),
and extended the technique to apply to the related problems of attribute efficient learning of linear separators and the popular signal processing problem of 1-bit compressed sensing (both in the passive learning model). The recent work of~\cite{diakonikolas2017learning} has extended the results of this paper to also include non-homogeneous halfspaces.
%The soft outlier technique introduced in our work has also been recently applied successfully in agnostically learning mixtures of distributions~\cite{DKK+16}.

\section{Preliminaries}

Recall that
$
\ell_{\tau}(w,x,y) = \max\left(0,1 - \frac{y(w\cdot x)} \tau\right)
$
and
$
\ell_{\tau}(w,T) = \frac{1}{|T|} \sum_{(x,y) \in T} \ell_{\tau}(w,x,y).
$
Similarly, the
expected hinge loss w.r.t.\ $D$ is defined as
$L_{\tau}(w,D) = E_{x \sim D} (\ell_{\tau}(w,x,\sgn(w^* \cdot x)))$.
Our analysis will also consider the distribution $D_{w,\gamma}$
obtained by conditioning $D$ on membership in the band, i.e.\ the
set $\{x: |w\cdot x| \le \gamma \}$.

We present our algorithms in the active learning model. Since we will
prove that our active algorithm only uses a polynomial number of
unlabeled samples, this will imply a guarantee for passive learning
setting.  At a high level, our algorithms are iterative learning
algorithms that operate in rounds.  In each round $k$ we focus on
points that fall near the decision boundary of the current hypothesis
$w_{k-1}$ and use them in order to obtain a new vector $w_k$ of lower
error.  In the malicious noise case, in round $k$ we first do a soft
outlier removal and then minimize hinge loss normalized appropriately
by $\tau_k$.
%A formal description appears labeled by
%Algorithm~\ref{fig:active-algorithm-malicious}, and a formal
%description of the outlier removal procedure is called
%Algorithm~\ref{fig:outlier-removal}.  We will present specific
%choices of the parameters of the algorithms in the following sections.

When analyzing the malicious noise model, we will refer to the examples
generated by the adversary as the {\em noisy examples}, and the other
examples as the {\em clean examples}.

For vectors $u$ and $v$, denote the angle between them by
$\theta(u,v)$.  Let $B(u,r)$ be the ball of radius $r$ centered at
$u$.

 The description of the algorithms and their analysis is simplified if we assume that it starts with a preliminary weight vector
$w_0$ whose angle with
the target $w^*$ is acute, i.e.\ that satisfies
$\theta(w_0, w^*) < \pi/2$.  We show in Appendix~\ref{a:w0} that this is
% also
without loss of generality for the types of problems we consider.

A probability distribution is {\em isotropic log-concave} if its density
can be written as $\exp(-\psi(x))$ for a convex function $\psi$, its
mean is ${\bf 0}$, and its covariance matrix is $I$.

\section{Adversarial label noise}
\label{s:label}

Algorithm~\ref{fig:active-algorithm-adversarial} is our algorithm
for learning in the presence of adversarial label noise.
In the analysis below, we assume that the algorithm has access to $w_0$
such that $\theta(w_0,w^*) < \pi/2$.  This can be shown to be
without loss of generality (see Appendix~\ref{a:w0})).

\begin{fragment*}[t]
          \textbf{Input}: allowed error rate $\epsilon$, probability of failure $\delta$,
an oracle that returns $x$, for $(x,y)$ sampled from $\mathrm{EX}_{\eta} (f,D)$,
and an oracle for getting the label from an example;
% a sequence of unlabeled sample sizes $n_k$, $k \in \integers$;
a sequence of sample sizes $m_k>0$;
a sequence of cut-off values $b_k >0$;
a sequence of hypothesis space radii $r_k >0$;
a precision value $\kappa >0$
          \vspace*{0.01in}
\begin{enumerate} \itemsep 0pt
\small
\item Draw $m_1$ labeled examples and put them into a working set $W$.
\item For $k=1,\ldots, \rounds = \lceil \log_2(1/\epsilon) \rceil$
\begin{enumerate}
\item Find ${v}_k \in B({w}_{k-1},r_k)$
to approximately minimize training hinge loss over $W$
s.t. $\Vert {v}_k \Vert_2 \leq 1$: \\
$\ell_{\tau_k}({v}_k ,W) \leq \min_{w \in B({w}_{k-1},r_k) \cap B(0,1))}
            \ell_{\tau_k}(w,W) + \kappa/8$.  \- \-
\item Normalize ${v}_k$ to have unit length, yielding
${w}_k = \frac{{v}_k}{\Vert {v}_k \Vert_2}$.
\item Clear the working set $W$.
\item \textbf{Until} $m_{k+1}$ additional data points are put in $W$, given an
unlabeled example $x$ for $(x,f(x))$ obtained from $\mathrm{EX}_{\eta} (f,D)$,
  \textbf{if} $|{w}_{k}\cdot x| \geq b_k$, \textbf{then} reject $x$
 \textbf{else} ask for the label of $x$ and put the example into $W$
\end{enumerate}
%\item Output $w_{\rounds}$.
\end{enumerate}
\textbf{Output}: Weight vector $w_{\rounds}$ of error at
most $\epsilon$ with probability $1-\delta$.
\caption{\label{fig:active-algorithm-adversarial}{\sc Computationally Efficient Algorithm tolerating
          adversarial label noise}}
\end{fragment*}

Theorem~\ref{thm:log-concave-agnostic} follows immediately from the
following theorem analyzing Algorithm~\ref{fig:active-algorithm-adversarial}.
\begin{theorem}
\label{t:adversarial.detailed}
Let a distribution $D$ over $\reals^d$
be isotropic log-concave.
Let $w^*$ be the (unit length) target
weight vector.  There are settings of the parameters of
Algorithm~\ref{fig:active-algorithm-adversarial},
and positive constants $M$, $C$ and $\epsilon_0$,
such that, for all $\epsilon < \epsilon_0$, for any $\delta>0$,
if the rate $\eta$ of adversarial noise satisfies $\eta < C \epsilon$, a number $n_k =
\mathrm{poly}(d,M^k, \log(1/\delta))$ of unlabeled examples in round $k$ and a
number $m_k = O\left(d \log\left(\frac{d}{\epsilon\delta}\right) (d
+ \log(k/ \delta) )\right)$ of labeled examples in round $k \geq 1$,
and $w_0$ such that $\theta(w_0,w^*) < \pi/2$,
after $s=O(\log(1/\epsilon))$ iterations, finds
$w_{\rounds}$ satisfying $\err(w_{\rounds}) \leq \epsilon$
with probability $\geq 1-\delta$.
%
% If the support of $D$ is bounded in a ball of radius $R(d)$,
% then,
% we have that
% $m_k = O\left(R(d)^2 (d + \log(k/ \delta) )\right)$ label
% requests suffice.
\end{theorem}

The rest of this section is dedicated to the proof of
Theorem~\ref{t:adversarial.detailed}.

\subsection{Relevant properties of isotropic log-concave distributions}

We start by listing some properties of i.l.c.\ distributions that we
will use in our analysis.
\begin{lemma}[\cite{LV07,Vem10}]
\label{l:ilc}
Assume that $D$  is isotropic
log-concave in $R^d$ and let $f$ be its density function.
\begin{enumerate}
\item[(a)]
 $\Pr_{x \sim D} {[ ||x||_2  \geq \alpha \sqrt{d}]} \leq e^{-\alpha +1}.$
 \item[(b)] Projections of $D$ onto subspaces of $\R^d$ are isotropic log-concave.
 \item[(c)] If $d=1$, then $\Pr_{x \sim D} {[ x \in [a,b]]} \leq |b-a|.$
\item[(d)] There is an absolute constant
$c_1$ such that, if $d=1$, $f(x) > c_1$ for all $x \in [-1/9,1/9]$.
\item[(e)] There is an absolute constant $c_2$
such that for any two unit vectors $u$ and $v$ in $\reals^d$  we have
$ c_2 \anglesep(v,u) \leq \Pr_{x \sim D} (\sign(u \cdot x) \neq \sign(v \cdot x)).$
\item[(f)] For any $d$, there are positive $c_3(d)$ and $c_4(d)$ such that
$f(x) \leq c_3(d) \exp(-c_4(d) || x ||)$.
\end{enumerate}
\end{lemma}
Parts (a)-(d) are from \cite{LV07}. Part (e) is implicit
in \cite{Vem10}, and set out explicitly in \cite{BL13}.
% If you know an earlier reference, please update this.
Part (f) is from \cite{KLT09}.

We will use the following lemma as a tool to analyze the
variance in directions close to the hypothesis at any given time.
\begin{lemma}
\label{lemma:tail.ilc}
For any $C > 0$,  there exist constants $c,c'$ such that,
for any isotropic log-concave distribution $D$,
for any $a$ such that, $\Vert a \Vert_2 \le 1$,
and $|| u - a ||_2 \leq r$, for any
$0 < \gamma < C$, and for any
$K \geq 4$, we have
\[
\Pr_{x \sim D_{u,\gamma}}\left(|a \cdot x| >
K \sqrt{r^2 + \gamma^2} \right) \leq ce^{-c'K\sqrt{1+\frac{\gamma^2}{r^2}}}.
\]
\end{lemma}
\begin{proof}
W.l.o.g.\ we may assume that $u = (1,0,0, \cdots, 0)$.

Let $a' = (a_2,...,a_d)$, and, for a random $x = (x_1,x_2,...,x_d)$ drawn from
$D_{u,\gamma}$, let $x' = (x_2,...,x_d)$.  We may rewrite the probability
that we want to bound as
\begin{equation}
\label{e:twoparts}
\Pr_{x \sim D_{u,\gamma}}\left(|a \cdot x| >
K \sqrt{r^2 + \gamma^2} \right)
= \frac{\Pr_{x \sim D} \left(|a \cdot x| >
                        K \sqrt{r^2 + \gamma^2}
                        \mbox{ and } |x_1| \leq \gamma \right)}
         {\Pr_{x \sim D} \left(|x_1| \leq \gamma \right)}.
\end{equation}
Lemma~\ref{l:ilc}
implies that there is a positive constant $c_1$ such that the
denominator satisfies the following lower bound:
\begin{equation}
\label{e:denominator}
\Pr_{x \sim D} \left(|x_1| \leq \gamma \right)
\geq c_1 \min \{ \gamma, 1/9 \}
\geq \frac{c_1 \gamma}{9 C}.
\end{equation}
So now, we just need an upper bound on the numerator.  We
have
\begin{eqnarray*}
&& \Pr_{x \sim D} \left(|a \cdot x| >
                        K \sqrt{r^2 + \gamma^2}
                        \mbox{ and } |x_1| \leq \gamma \right)  \leq \Pr_{x \sim D} \left(|a' \cdot x'| >
                        K \sqrt{r^2 + \gamma^2} - \gamma \mbox{ and } |x_1| \leq \gamma \right) \\
&& \leq \Pr_{x \sim D} \left(|a' \cdot x'| >
                        (K-1) \sqrt{r^2 + \gamma^2} \mbox{ and } |x_1| \leq \gamma \right).
\end{eqnarray*}
Define $a'' = \frac{a'}{\|a'\|}$. Define random variable $Y$ to be $a'' \cdot x$ and a random variable $X$ to be $x_1$ where $x$ is drawn from $D$. Then we have $E[X] = E[x_1] = 0$. $E[Y] = E[a'' \cdot x] = 0$. Furthermore, $E[X^2] = 1$, $E[Y^2] = 1$ and $E[XY] = 0$. Hence, the joint distribution of $X$ and $Y$ is isotropic log-concave. Let $f(X,Y)$ be the p.d.f. of this distribution. Then the numerator can be upper bounded as follows:
$$
4\Pr_{x \sim D} \left(a' \cdot x' >
                        (K-1) \sqrt{r^2 + \gamma^2} \mbox{ and } 0 \leq x_1 \leq \gamma \right) \leq 4\int_{0}^{\gamma} \int_{\frac{(K-1) \sqrt{r^2 + \gamma^2}}{\|a'\|}}^{\infty} f(X,Y) dX dY.
$$
Applying Part (f) of Lemma~\ref{l:ilc} with $d=2$, there are constants
$c$ and $c'$ such that the numerator is at most
\begin{eqnarray*}
&& c\int_{0}^{\gamma} \int_{\frac{(K-1) \sqrt{r^2 + \gamma^2}}{\|a'\|}}^{\infty} \exp(-c'\sqrt{X^2 + Y^2}) dX dY\\
&& \leq c\int_{0}^{\gamma} \int_{\frac{(K-1) \sqrt{r^2 + \gamma^2}}{\|a'\|}}^{\infty} \exp(-c' Y) dX dY\\
&& \leq c'' \gamma \exp(-c'\frac{(K-1) \sqrt{r^2 + \gamma^2}}{\|a'\|}),
\end{eqnarray*}
in part because the fact that $\|a'\| \leq r$ implies that
$\frac{(K-1) \sqrt{r^2 + \gamma^2}}{\|a'\|} > 3$.
Hence the numerator of (\ref{e:twoparts}) is at most $\leq c'' \gamma \exp(-c'(K-1) \sqrt{1+\frac{\gamma^2}{r^2}})$, completing the proof.
\end{proof}

Armed with Lemma~\ref{lemma:tail.ilc},
now we are ready for the variance bound.  It improves on a
bound from an earlier version of this paper \cite{ABL14}, matching
what was obtained in that version for the special case of the uniform
distribution.  This improvement is what leads to closing a log
factor gap in the tolerable rate of noise for i.l.c.\ distributions.
\begin{lemma}
\label{lemma:var.small}
Assume that $D$ is isotropic log-concave.

For any $c_3$, there is a constant $c_4$ such that, for
all $0 < \gamma \leq c_3$,
for all $a$ such that $\Vert u-a \Vert_2 \le r$ and $\Vert a \Vert_2 \le 1$
\[
\E_{x \sim D_{u,\gamma}} ((a \cdot x)^2)
  \leq c_4 (r^2 + \gamma^2) .
\]
\end{lemma}
{\bf Proof:} Let  $z = \sqrt{r^2 + \gamma^2}$. Setting, with foresight, $t = 16 z^2$, we have
\begin{eqnarray}
\nonumber
&& \E_{x \sim D_{u,\gamma}} ((a \cdot x)^2) \\
\nonumber
&& = \int_0^{\infty} \Pr_{x \sim D_{u,\gamma}} ((a \cdot x)^2 \geq \alpha)\; d\alpha \\
&& \leq t+ \int_t^{\infty} \Pr_{x \sim D_{u,\gamma}} ((a \cdot x)^2 \geq \alpha)\; d\alpha.
\end{eqnarray}
Since $t \geq 16 z^2$, Lemma~\ref{lemma:tail.ilc}
implies that, for absolute constants $c$ and $c'$, we have
\begin{eqnarray*}
&& \E_{x \sim D_{u,\gamma}} ((a \cdot x)^2)
\leq t
  + c \int_t^{\infty}
    \exp(-c' \frac{\sqrt{\alpha}}{r}) d\alpha.
\end{eqnarray*}

Now, we want to evaluate the integral.
Using a change of variables $u^2 = \alpha$, we get
\[
\int_t^{\infty} \exp\left( -c' \sqrt{\alpha}/r \right) d\alpha
=
2 \int_{\sqrt{t}}^{\infty} u \exp\left( -c' u/r \right) du
=  \frac{2 r^2}{{c'}^2} \left(\frac{\sqrt{t}}{r} + 1\right) \exp\left( -c'\sqrt{t}/r \right).
\]
Putting it together, we get
\[
\E_{x \sim D_{u,\gamma}} ((a \cdot x)^2)
 \leq t + {\frac{2 c r^2}{{c'}^2} \left(\frac{\sqrt{t}}{r} + 1\right)
          \exp\left( -c' \sqrt{t}/r \right)}
 \leq \left(1 + \frac{c}{2{c'}^2} \right) t
                + \frac{2 c r^2}{{c'}^2} \exp\left(-4 c'\frac z r\right)
\]
and, since $t = 16 z^2$ and $\frac z r \geq 1$, we get the desired bound.
\qed

Finally, we will use a lemma from \cite{BL13} that
generalizes and strengthens a key lemma from \cite{Balcan07}.
It is used to show that, during the learning process,
most large-margin examples are classified correctly.
\begin{lemma}[Theorem 4 of \cite{BL13}]
\label{lemma:vectors.ilc}
For any $c_5 > 0$, there is a $c_6 > 0$ such that the following holds.
Let $u$ and $v$ be two unit vectors in $R^d$, and assume that
$\theta(u,v) = \alpha < \pi/2$. If $D$ is isotropic log-concave in
$\R^d$, then $\Pr_{x \sim D} [ \mathrm{sign}(u \cdot x) \neq
  \mathrm{sign}(v \cdot x) \mbox{ and } | v \cdot x| \geq c_6 \alpha]
\leq c_5 \alpha.$
\end{lemma}

%The intuition in the case of adversarial label noise is the same as
%for malicious noise, except that, because the adversary cannot change
%the marginal distribution over the instances, it is not necessary to
%perform outlier removal.  Bounds for learning with adversarial label
%noise are not corollaries of bounds for learning with malicious noise,
%however, because, while the marginal distribution over the instances
%for {\em all} the examples, clean and noisy, is not affected by the
%adversary, the marginal distribution over the {\em clean} examples is
%changed (because the examples whose classifications are changed are removed
%from the distribution over clean examples).

\subsection{Parameters for the algorithm}
\label{s:parameters.labelnoise}

For easy reference throughout the proof, here we collect together
the settings of the parameters of the algorithm.
% (Our analysis involves a number of constants.  We show that there
% are values of all of the constants that work, but, because of the
% somewhat intricate web of constraints among them, this requires some
% care.)

Let $M = \max \{ \frac2 {c_2 \pi}, 2 \}$, where $c_2$ is from
Lemma~\ref{l:ilc}.
Let $c_1'$ be the value of $c_6$ in
% part (\ref{i:vectors})
Lemma~\ref{lemma:vectors.ilc}
corresponding
to the case where $c_5$ is $\frac{c_2}{4 M}$;
then let $b_k= c_1' {M}^{-k}$.

Let $c_2'$ be $c_1$ from Lemma~\ref{l:ilc}.
% Let $c_3' = 1$.
Let $r_k = \min \{ {M}^{-(k-1)}/c_2, \pi/2 \}$,
    where $c_2$ is from Lemma~\ref{l:ilc}
% and $\kappa = \frac{1}{4 c_1' c_3' M}$.
and $\kappa = \frac{1}{4 c_1' M}$.
Let $\tau_k = \frac{c_1 \min \{ b_{k-1}, 1/9 \} \kappa}{6}$,
where $c_1$ is the value from Lemma~\ref{l:ilc}.

Let $z^2_k = r^2_k + b^2_{k-1}$.

\subsection{The error within a band in each iteration}
At each iteration, Algorithm~\ref{fig:active-algorithm-adversarial}
concentrates its attention on examples in the band.  Our next theorem
analyzes its error on these examples.

\begin{theorem}
\label{thm:low-error-inside-band.labelnoise}
For $k \leq \lceil \log_M (1/\epsilon) \rceil$,
if $err_{D}(w_{k-1}) \le M^{-(k-1)}$,
with probability $1 - \frac{\delta}{k+k^2}$ (over the random examples
in round $k$),
after round $k$ of Algorithm~\ref{fig:active-algorithm-adversarial},
we have $err_{D_{w_{k-1}, b_{k-1}}}(w_k) \le \kappa$.
\end{theorem}

We will prove Theorem~\ref{thm:low-error-inside-band.labelnoise} using a series
of lemmas below. First, we bound the hinge loss of the target $w^*$
within the band $S_{w_{k-1}, b_{k-1}}$.  Since we are analyzing a
particular round $k$, to reduce clutter in the formulas, for the rest
of this section, let us refer to
$\ell_{\tau_k}$ as $\ell$, and
$L_{\tau_k}(\cdot,D_{w_{k-1}, b_{k-1}})$ as $L(\cdot)$.

\begin{lemma}
\label{lemma:opt.hinge.underlying}
$L(w^*) \leq \kappa/6$.
\end{lemma}
\begin{proof}
Notice that $y(w^*\cdot x)$ is never negative, so, on any
clean example $(x,y)$, we have
\[
\ell(w^*,x,y) = \max \left\{ 0, 1 - \frac{y(w^* \cdot x)}{\tau_k} \right\}
            \leq 1,
\]
and, furthermore, $w^*$ will pay a non-zero hinge only inside the region where
$|w^*\cdot x| < \tau_k$.
Hence,
\[
L(w^*) \leq \Pr_{D_{w_{k-1}, b_{k-1}}}(|w^*\cdot x| \leq \tau_k)
= \frac {\Pr_{x\sim D} (|w^*\cdot x| \leq \tau_k \, \& \, |{w}_{k-1}\cdot x| \leq b_{k-1})} {\Pr_{x \sim D}(|{w}_{k-1}\cdot x| \leq b_{k-1})}.
\]

Using Part (d) of
Lemma~\ref{l:ilc}, for
the value of $c_1$ in that definition,
we can lower bound the denominator:
\[
\Pr_{x \sim D}(|w_{k-1} \cdot x| < b_{k-1})
  \ge  2 c_1 \min \{ b_{k-1}, 1/9 \}.
\]
% part (\ref{i:density})
Part (c) of Lemma~\ref{l:ilc}
also implies that the numerator is at most
\[
\Pr_{x \sim D} (|w^* \cdot x| \leq \tau_k) \leq 2 \tau_k.
\]

Hence, we have
\[
L(w^*) \leq \frac{2 \tau_k}{2 c_1 \min \{ b_{k-1}, 1/9 \}} = \kappa/6.
\]
\end{proof}

%\begin{theorem}
%\label{thm:low-error-inside-band.labelnoise}
%During round $k$ of Algorithm~\ref{fig:active-algorithm-adversarial},  with probability $1 - \frac{\delta}{k+k^2}$,
%we have
%\[
%err_{D_{w_{k-1}, b_{k-1}}}(w_k) \le \kappa.
%\]
%\end{theorem}

%We will prove Theorem~\ref{thm:low-error-inside-band.labelnoise} using a
%series of lemmas.
%
%Define $\ell$ and $L$ as in the proof of
%Theorem~\ref{thm:low-error-inside-band}.
%
%First, Lemma~\ref{lemma:opt.hinge.underlying},
%that $L(w^*) \leq \kappa/6$, also applies here, using exactly
%the same proof.

%From here, the proof is organized a little differently than
%before.  There are two main structural differences.  First,
%before, we analyzed a relatively large set of unlabelled examples
%on which the algorithm performed soft outlier removal, before subsampling
%and training.  Here, since the algorithm will not perform outlier removal,
%we may analyze the underlying distribution in place of the
%large unlabeled sample.  The second difference is that, whereas before,
%we separately analyzed the clean examples and the dirty examples,
%here, we will analyze properties of the noisy portion of the underlying
%distribution, but, here, instead of comparing it with the clean portion,
%as we did before, we will compare it with the distribution that would
%be obtained by fixing the incorrect labels.  One reason that this is
%more convenient is that the marginal over the instances
%of this ``fixed'' distribution is $D$
%(whereas the marginal of the clean examples, in general, is not).

Let $\tilde{P}$ be the joint distribution used by the algorithm,
which includes the noisy labels chosen by the adversary.
Let $N = \{ (x,y) : \sign(w^* \cdot x) \neq y \}$ consist of
noisy examples, so that $\tilde{P}(N) \leq \eta$.
Let $P$ be the joint distribution obtained by
applying the correct labels.  Let $\tilde{P}_k$ be the distribution
on the examples given to the algorithm in round $k$ (obtained by
conditioning $\tilde{P}$ to examples that fall within the band), and
let $P_k$ be the corresponding joint distribution with clean labels.

The key lemma here is to bound how far the expected loss with respect
to the distribution $\tilde{P}_k$ given to the algorithm is from
to the expected loss with respect to the distribution $P_k$ with the
cleaned labels.  Informally, it shows that, to an extent,
$\E_{(x,y) \in \tilde{P}_k} (\ell(w, x, y))$ is an effective
proxy for $\E_{(x,y) \in P_k} (\ell(w, x, y))$.
\begin{lemma}
\label{lemma:clean.vs.dirty.labelnoise}
There is an absolute positive constant $c$ such that,
if we define $z_k = \sqrt{r_k^2 + b_{k-1}^2}$ then
for any $w \in B({w}_{k-1},r_k)$, we have
\begin{equation}
\label{e:clean.vs.dirty}
| \E_{(x,y) \in P_k} \ell(w, x, y)) - \E_{(x,y) \in \tilde{P}_k} \ell(w, x, y)) |
 \leq c \sqrt{\frac{\eta}{\epsilon}} \frac{ z_k }{\tau_k}.
\end{equation}
\end{lemma}
\begin{proof}
Fix an arbitrary $w \in B({w}_{k-1},r_k)$.
Recalling that $N$ is the set of noisy examples, and that the
marginals of $P_k$ and $\tilde{P}_k$ on the inputs are the same,
we have
\begin{align*}
& | \E_{(x,y) \in P_k} (\ell(w, x, y)) - \E_{(x,y) \in \tilde{P}_k} (\ell(w, x, y)) | \\
& = | \E_{(x,y) \in \tilde{P}_k} (\ell(w, x, y) - \ell(w,x,\sign(w^* \cdot x))) | \\
& = | \E_{(x,y) \in \tilde{P}_k} (1_{(x,y) \in N} (\ell(w, x, y) - \ell(w,x,-y))) | \\
& \leq \E_{(x,y) \in \tilde{P}_k} (1_{(x,y) \in N} | \ell(w, x, y) - \ell(w,x,-y)|) \\
& \leq 2 \E_{(x,y) \in \tilde{P}_k} \left(1_{(x,y) \in N}
                  \left(\frac{| w \cdot x |}{\tau_k} \right) \right) \\
& = \frac{2}{\tau_k} \E_{(x,y) \in \tilde{P}_k} \left(1_{(x,y) \in N}
                 | w \cdot x | \right) \\
& \leq \frac{2}{\tau_k} \sqrt{\Pr_{(x,y) \sim \tilde{P}_k} (N)}
      \times \sqrt{\E_{(x,y) \in \tilde{P}_k} ((w \cdot x)^2)}
\end{align*}
by the Cauchy-Schwarz inequality.
% Part~\ref{i:density}
% Part~1
% of Definition~\ref{d:admissible}
Lemma~\ref{l:ilc} implies that,
for an absolute constant $c'$,
\[
\Pr_{(x,y) \in \tilde{P}_k} (N)
  \leq \frac{\Pr_{(x,y) \in \tilde{P}} (N)}{\Pr_{(x,y) \in \tilde{P}} (S_{w_{k-1},b_{k-1}})}
  \leq \frac{\eta}{c' M^{-k}}
  \leq \frac{\eta}{c' \epsilon/M}
\]
since $k \leq \lceil \log_M (1/\epsilon) \rceil$, and
Lemma~\ref{lemma:var.small}
implies
$\E_{(x,y) \in \tilde{P}_k} ((w \cdot x)^2) \leq c' z_k^2$.
\end{proof}

Finally, we need some bounds about estimates of the hinge loss.

\begin{lemma}
\label{l:vc.hinge}
Let
\[
\cleaned(W) = \{ (x, \sign(w^* \cdot x)): (x,y) \in W \}.
\]

With
probability $1 - \frac{\delta}{k + k^2}$, for all
$w \in B(w_{k-1},r_k)$, we have
\begin{equation}
\label{e:vc.labelnoise}
| \E_{(x,y) \in \tilde{P}_k}(\ell(w,x,y))
    - \ell(w,W) | \leq \kappa/16
\mbox{, and }
| \E_{(x,y) \in P_k}(\ell(w,x,y))
    - \ell(w,\cleaned(W)) | \leq \kappa/16.
\end{equation}

\end{lemma}
\begin{proof}
See Appendix~\ref{a:vc}.
\end{proof}

\begin{proof}[of Theorem~\ref{thm:low-error-inside-band.labelnoise}]

With probability $1 - \frac{\delta}{k + k^2}$,
we have, for absolute constants $c_1$ and $c_2$, the following:
\begin{eqnarray*}
\err_{D_{w_{k-1}, b_{k-1}}}({w_k})
  & = & \err_{D_{w_{k-1}, b_{k-1}}}({v_k}) \\
  & \leq & \E_{(x,y) \in P_k}(\ell(v_k,x,y))
        \mbox{\hspace{0.1in}(since for each error, the hinge loss is at least $1$)} \\
  & \leq & \E_{(x,y) \in \tilde{P}_k}(\ell(v_k,x,y))
                + c_1 \sqrt{\frac{\eta}{\epsilon}} \times \frac{z_k }{\tau_k}
     \mbox{\hspace{0.1in}(by Lemma~\ref{lemma:clean.vs.dirty.labelnoise})}  \\
  & \leq & \ell({v_k},W)
                + c_1 \sqrt{\frac{\eta}{\epsilon}} \times \frac{z_k }{\tau_k}
            + \kappa/16
            \mbox{\hspace{0.1in}(by Lemma~\ref{l:vc.hinge})}
                                \\
  & \leq & \ell(w^*,W)
                + c_1 \sqrt{\frac{\eta}{\epsilon}} \times \frac{z_k }{\tau_k}
            + \kappa/8
                                \\
  & \leq & \E_{(x,y) \in \tilde{P}_k}(\ell(w^*,x,y))
                + c_1 \sqrt{\frac{\eta}{\epsilon}} \times \frac{z_k  }{\tau_k}
            + \kappa/4
            \mbox{\hspace{0.1in}(by Lemma~\ref{l:vc.hinge})}
                                \\
  & \leq & \E_{(x,y) \in P_k}(\ell(w^*,x,y))
                + c_2 \sqrt{\frac{\eta}{\epsilon}} \times \frac{z_k }{\tau_k}
            + \kappa/4
     \mbox{\hspace{0.1in}(by Lemma~\ref{lemma:clean.vs.dirty.labelnoise})}  \\
  & \leq &  c_2 \sqrt{\frac{\eta}{\epsilon}}  \times \frac{z_k }{\tau_k}
            + \kappa/2,
\end{eqnarray*}
since $L(w^*) \leq \kappa/6$.  Since $z_k/\tau_k = \Theta(1)$, there is
an constant $c_3$ such that,
$\eta \leq c_3 \epsilon$ suffices for
$\err_{D_{w_{k-1}, b_{k-1}}}({w_k}) \leq \kappa$, completing the proof.
\end{proof}

\subsection{Putting it together}
Now we are ready to put everything together.  The proof of
Theorem~\ref{t:adversarial.detailed} follows the high level structure of the
proof of \cite{Balcan07}; the new element is
the application of Theorem~\ref{thm:low-error-inside-band.labelnoise}
which analyzes the performance
of the hinge loss minimization algorithm for learning inside the band.

{\bf Proof} (of Theorem~\ref{t:adversarial.detailed}):
We will prove by induction on $k$ that after $k \leq s$ iterations, we
have $\err_D({w}_k) \leq M^{-k}$
with probability $1-\delta (1-1/(k+1))/2$.

When $k = 0$, all that is required is $\err_D({w}_0) \leq 1$.

Assume now the claim is true for $k-1$ ($k \geq 1$).
Then by induction hypothesis, we
know that with probability at least $1- \delta (1-1/k)/2$,
${w}_{k-1}$ has error at most $M^{-(k-1)}$.
Using Part (e) of Lemma~\ref{l:ilc},
% part \ref{i:angle}
this implies that
$\theta({w}_{k-1},w^*) \leq M^{-(k-1)}/c_6$.
This in turn implies $\theta({w}_{k-1},w^*) \leq \pi/2$.
(When $k = 1$, this is by assumption, and otherwise it is
implied by
% part \ref{i:angle}
Part (e) of Lemma~\ref{l:ilc}.)

Let us define $S_{w_{k-1},b_{k-1}}=\{x: |{w}_{k-1}\cdot
x| \leq b_{k-1}\}$ and $\bar{S}_{w_{k-1},b_{k-1}}=\{x: |{w}_{k-1}\cdot
x| > b_{k-1}\}$.  Since ${w}_{k-1}$ has unit length, and ${v}_k \in
B({w}_{k-1},r_k)$, we have $\theta({w}_{k-1},{v}_k) \leq r_k$
which in turn implies
$\theta({w}_{k-1},{w}_k) \leq \min \{ {M}^{-(k-1)}/c_6, \pi/2 \}$.

Applying
% part \ref{i:vectors}
% part 2
% of Definition~\ref{d:admissible}
Lemma~\ref{lemma:vectors.ilc} to
bound the error rate outside the band, we have both
\[
\Pr_{x} \left[ ({w}_{k-1} \cdot x ) ({w}_{k} \cdot x)  <
0, x \in \bar{S}_{w_{k-1},b_{k-1}}
 \right]
\leq \frac{{M}^{-k}}{4}
\]
and
\[
\Pr_{x} \left[ ({w}_{k-1} \cdot x ) (w^* \cdot x)  < 0, x \in \bar{S}_{w_{k-1},b_{k-1}}
 \right]
\leq \frac{{M}^{-k}}{4}.
\]

Taking the sum, we obtain $\Pr_{x} \left[ ({w}_{k} \cdot x )
(w^* \cdot x) < 0,
x \in \bar{S}_{w_{k-1},b_{k-1}} \right] \leq \frac{{M}^{-k}}{2}.$ Therefore, we have
\[
\err({w}_k)\leq (\err_{{D}_{w_{k-1},b_{k-1}}}({w}_k))
\Pr({S}_{w_{k-1},b_{k-1}}) + \frac{{M}^{-k}}{2}.
\]
Since $\Pr({S}_{w_{k-1},b_{k-1}}) \leq 2 b_{k-1}$, this implies
\[
\err({w}_k) \leq (\err_{{D}_{w_{k-1},b_{k-1}}}({w}_k))
2 b_{k-1} + \frac{{M}^{-k}}{2}
\leq {M}^{-k}
      \left((\err_{{D}_{w_{k-1},b_{k-1}}}({w}_k)) 2 c_1' M
                  + 1/2\right).
\]

Recall that ${D}_{w_{k-1},b_{k-1}}$ is the distribution obtained by
conditioning $D$ on the event that $x \in S_{w_{k-1},b_{k-1}}$.
Combining Theorem~\ref{thm:low-error-inside-band.labelnoise}
with the induction hypothesis,
\begin{align*}
& \Pr(\err({w}_k) > 1/{M}^k) \\
 & \leq \Pr(\err({w}_k) > 1/{M}^k | \err({w}_{k-1}) \leq 1/{M}^{k-1})
  + \Pr(\err({w}_{k-1}) > 1/{M}^{k-1}) \\
 & \leq \frac{\delta}{2(k+k^2)} + \delta (1-1/k)/2 \\
 & = \delta (1-1/(k+1))/2.
\end{align*}
This completes the proof of the induction, and therefore shows, with
probability at least $1-\delta$,
$O(\log(1/\epsilon))$ iterations suffice to achieve $\err({w}_k) \leq \epsilon$.

A polynomial number of unlabeled samples are required by the algorithm
and the number of labeled examples required by the algorithm is
$\sum_k m_k = O(d (d + \log \log (1/\epsilon)
+ \log(1/\delta)) \log(1/\epsilon))$.
\qed

\noindent \textbf{Remark:} Following the publication of this work, it was brought to our attention by Steve Hanneke that a more careful analysis
of the algorithm achieves the right labeled sample complexity of $O(d + \log \log (1/\epsilon) + \log(1/\delta))\log(1/\epsilon)$. This is implicit in~\cite{HKL15}\footnote{See \href{https://39afa9d6-a-62cb3a1a-s-sites.googlegroups.com/site/stevehanneke/docs/2015/agnostic-linsep-uniform-note.pdf?attachauth=ANoY7cpcOAMqDmplO1NbVDJwYvWWSO7lrK4rgkBcB7BchHlmZzISs8J9hlRI-oy9qaipHkZxjxfeWauS2EwKALCSipwz_wJ9Fa82FaqKNcvvQYyi06wGdLCaqvp0yNgMGhSzAcE02XQuJpgo4XKr0l2Mvv_vmkPajkAX_24C39QFXnpzLAKirD--48twaMdK8He4Xa_bwE6pjuxKSjmvKunF1effwp-ghTDzbvLgOPdBwg8iDohEHO1enuhPqvAfet-Mat8TO6fT&attredirects=0}{[link]} for a self contained proof graciously communicated to us by Steve Hanneke.}

\section{Learning with malicious noise}
\label{s:malicious}

The intuition in the case of malicious noise is the same as
for adversarial noise, except that, because the adversary can also change
the marginal distribution over the instances, it is necessary to
perform an additional outlier removal step at each stage of the algorithm.
Furthermore, we need a different analysis since in this case the marginal distribution over the examples
can change.

\begin{fragment*}[t]
\caption{\label{fig:active-algorithm-malicious}{\sc Computationally Efficient Algorithm tolerating malicious noise}
}
\textbf{Input}: allowed error rate $\epsilon$, probability of failure $\delta$,
an oracle that returns $x$, for $(x,y)$ sampled from $\mathrm{EX}_{\eta} (f,D)$,
and an oracle for getting the label $y$ from an example;
a sequence of unlabeled sample sizes $n_k > 0$, $k \in \integers$;
a sequence of labeled sample sizes $m_k>0$;
a sequence of cut-off values $b_k >0$;
a sequence of hypothesis space radii $r_k >0$;
a sequence of removal rates $\xi_k$;
a sequence of variance bounds $\sigma^2_k$; precision value $\kappa$; weight vector $w_0$.
 \vspace*{0.01in}
\begin{enumerate} \itemsep 0pt
\small
\item Draw $n_1$ unlabeled examples and put them into a working set $W$.
\item For $k=1,\ldots, \rounds = \lceil \log_2(1/\epsilon) \rceil$
\begin{enumerate}
\item Apply Algorithm~\ref{fig:outlier-removal} to $W$ with parameters
$u \leftarrow {w}_{k-1}$, $\gamma \leftarrow b_{k-1}$, $r \leftarrow r_k$, $\xi \leftarrow \xi_k$, $\sigma^2 \leftarrow \sigma^2_k$ and let $q$ be the output function $q: W \rightarrow [0,1]$ . Normalize $q$ to form a probability
   distribution $p$ over $W$.
\item Choose $m_k$ examples from $W$ according to $p$ and reveal their labels. Call this set $T$.
\item Find ${v}_k \in B({w}_{k-1},r_k)$
to approximately minimize training hinge loss over $T$
s.t. $\Vert {v}_k \Vert_2 \leq 1$: \\
$\ell_{\tau_k}({v}_k ,T) \leq \min_{w \in B({w}_{k-1},r_k) \cap B(0,1))}
            \ell_{\tau_k}(w,T) + \kappa/8$.  \- \-\\
 Normalize ${v}_k$ to have unit length, yielding
${w}_k = \frac{{v}_k}{\Vert {v}_k \Vert_2}$.
\item Clear the working set $W$.
\item \textbf{Until} $n_{k+1}$ additional data points are put in $W$, given
 unlabeled $x$ for $(x,f(x))$ obtained from $\mathrm{EX}_{\eta} (f,D)$,
  \textbf{if} $|{w}_{k}\cdot x| \geq b_k$, \textbf{then} reject $x$
 \textbf{else} put into $W$.
\end{enumerate}
\end{enumerate}
\textbf{Output}: weight vector $w_{\rounds}$ of error at
most $\epsilon$ with probability $1-\delta$.
\end{fragment*}

\begin{fragment*}[t]
\caption{\label{fig:outlier-removal}{\sc Localized soft outlier removal procedure}
} \vspace*{0.01in}

 \textbf{Input}: a set $S = \{(x_1, x_2, \ldots, x_n)\}$ of samples;
                the reference unit vector $u$; %% failure probability $\delta$;
                desired radius $r$;
                a parameter $\xi$ specifying the desired bound on the fraction of clean examples removed;
                a variance bound $\sigma^2$
\begin{enumerate} \itemsep 0pt
\small
\item Find $q: S \rightarrow [0,1]$ satisfying the following constraints:
\begin{enumerate}
\item for all $x \in S$, $0 \leq q(x) \leq 1$
\item $\frac{1}{|S|} \sum_{(x,y) \in S} q(x) \geq 1 - \xi$
\item for all
$w \in B(u, r) \cap B({\bf 0},1),
\; \frac{1}{|S|} \sum_{x \in S} q(x) (w\cdot x)^2
%    \leq c \sigma^2$%(\frac{r^2}{d-1} + \gamma^2)$,
% removing the c, which will be folded into the definition of \sigma^2
   \leq \sigma^2$.%(\frac{r^2}{d-1} + \gamma^2)$,
\end{enumerate}
%Here $c$ is a constant from the analysis.
%For the case of uniform distribution $g(d,r,\gamma) = (\frac{r^2}{d-1} + \gamma^2$)
%\item Output $q$.
\end{enumerate}
\textbf{Output}: A function $q: S \rightarrow [0,1]$.
\end{fragment*}

%Bounds for learning with malicious
%noise are not corollaries of bounds for learning with malicious noise,
%however, because, while the marginal distribution over the instances
%for {\em all} the examples, clean and noisy, is not affected by the
%adversary, the marginal distribution over the {\em clean} examples is
%changed (because the examples whose classifications are changed are removed
%from the distribution over clean examples).

Theorem~\ref{thm:log-concave-malicious} follows immediately from the
following theorem analyzing Algorithm~\ref{fig:active-algorithm-malicious}.
\begin{theorem}
\label{t:malicious.detailed}
Let a distribution $D$ over $\reals^d$ be isotropic log-concave.  Let
$w^*$ be the (unit length) target weight vector.  There are settings
of the parameters of Algorithm~\ref{fig:active-algorithm-malicious},
and positive constants $M$, $C$ and $\epsilon_0$, such that, for all
$\epsilon < \epsilon_0$, for any $\delta>0$, if the rate
$\eta$ of malicious noise satisfies $\eta < C \epsilon$, a number $n_k
= \mathrm{poly}(d,M^k, \log(1/\delta))$ of unlabeled examples in round
$k$ and a number $m_k = O\left(d
\log\left(\frac{d}{\epsilon\delta}\right) (d + \log(k/ \delta)
)\right)$ of labeled examples in round $k \geq 1$, and $w_0$ such that
$\theta(w_0,w^*) < \pi/2$, after $s=O(\log(1/\epsilon))$ iterations,
finds $w_{\rounds}$ satisfying $\err(w_{\rounds}) \leq \epsilon$ with
probability $\geq 1-\delta$.
%
% If the support of $D$ is bounded in a ball of radius $R(d)$,
% then,
% we have that
% $m_k = O\left(R(d)^2 (d + \log(k/ \delta) )\right)$ label
% requests suffice.
\end{theorem}

The rest of this section is dedicated to the proof of
Theorem~\ref{t:malicious.detailed}.

\subsection{Parameters for the algorithm}

With the exception of the parameters $\sigma^2_k$ and $\xi_k$
of the outlier removal procedure,
the parameters are set exactly as in Section~\ref{s:parameters.labelnoise}.

The values of $\sigma^2_k$ and $\xi_k$ are
determined by our analysis: $\sigma^2_k$ is
$c (r^2_k + b^2_{k-1})$, for the value of
$c$ in Theorem~\ref{thm:lopchop} below,
that corresponds to the choice, in the statement of
Theorem~\ref{thm:lopchop}, of $C = c_1'$.
Finally,
$\xi_k = \min(\frac{\kappa}{2^7},
              \frac{\kappa^2\tau^2_k}{c_4 2^{16} z^2_k })$,
for the value of $c_4$ in Lemma~\ref{lemma:var.small} corresponding
to the choice $c_3 = b_0$.

\subsection{Analysis of the outlier removal subroutine}

The analysis of the learning algorithm uses the following
lemma about Algorithm~\ref{fig:outlier-removal}.

\begin{theorem}
\label{thm:lopchop}
For any $C > 0$, there is a constant $c$ and a polynomial $p$ such that,
for all $\xi > 2 \eta'$ and all $0 < \gamma < C$,
if $n \geq
p(1/\eta', d, 1/\xi, 1/\delta,1/\gamma,1/r)$, then,
with probability $1 - \delta$, the output $q$ of Algorithm~\ref{fig:outlier-removal} satisfies the following:
\begin{itemize}
\item $\sum_{x \in S} q(x) \geq (1 - \xi) |S| $ (a fraction $1 - \xi$ of
the weight is retained)
\item For all unit length $w$ such that $\Vert w-u \Vert_2 \le r$,
\begin{equation}
\label{e:lopchop.varbound}
\frac{1}{|S|} \sum_{x \in S} q(x) (w \cdot x)^2
   \leq c (r^2 + \gamma^2).
\end{equation}
\end{itemize}
Furthermore, the algorithm can be implemented in polynomial time.
\end{theorem}

Our proof of Theorem~\ref{thm:lopchop} proceeds through a series of
lemmas.
% (b) and (c)
Lemma~\ref{l:ilc} implies that we
may assume without loss of generality that the instances $x_1,...,x_n$
from $S$ are distinct.
Obviously, a feasible $q$ satisfies the
requirements of the lemma.  So all we need to show is
\begin{itemize}
\item there is a feasible solution $q$, and
\item we can simulate a separation oracle: given a provisional solution
$\hat{q}$, we can find a linear constraint violated by $\hat{q}$
in polynomial time.
\end{itemize}

We will start by working on proving that there is a feasible
$q$.  First of all, a Chernoff bound implies that
$n \geq \mathrm{poly}(1/\eta', 1/\delta)$ suffices for it to
be the case that, with probability
$1 - \delta$, at most $2 \eta'$ members of $S$ are noisy.  Let
us assume from now on that this is the case.

We will show that $q^*$ which sets
$q^*(x) = 0$ for each noisy point, and $q^*(x) = 1$
for each non-noisy point, is feasible.

First, we use VC tools to show that,
if enough examples are chosen, a bound like
% part 4
% of Definition~\ref{d:admissible},
Lemma~\ref{lemma:var.small},
but averaged over the clean examples, likely holds
for all relevant directions.
\begin{lemma}
\label{lemma:all.var}
If we draw $\ell$ times i.i.d. from
$D$ to form $X_C$, with probability $1 - \delta$, we have that for
any unit length $a$,
\[
\frac{1}{\ell} \sum_{x \in X_C} (a \cdot x)^2
   \leq E[(a\cdot x)^2]
       +  \sqrt{\frac{O(d \log(\ell/\delta) (d + \log (1/\delta)))}{\ell}}.
\]
\end{lemma}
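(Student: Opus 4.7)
The plan is to prove a uniform deviation bound over the function class $\mathcal{F} = \{x \mapsto (a \cdot x)^2 : \|a\|_2 = 1\}$, using a pseudo-dimension bound on $\mathcal{F}$, a truncation argument to handle the unbounded range of the functions, and a standard concentration-based uniform convergence inequality. Since we only need a one-sided bound on the empirical average, I do not even need a two-sided statement.

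First, I would bound the pseudo-dimension of $\mathcal{F}$. Each function $f_a(x) = (a \cdot x)^2 = x^\top (aa^\top) x$ is a quadratic form whose parameter $a$ lives in a $d$-dimensional space. By classical arguments (for instance, Warren's theorem on sign patterns of real polynomials, or the observation that subgraphs of the form $\{(x,t) : (a \cdot x)^2 - t > 0\}$ form a VC class of dimension $O(d)$ because they are defined by a polynomial of fixed degree in $d$ parameters), one gets $\mathrm{pdim}(\mathcal{F}) = O(d)$.

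Second, I would truncate to handle that the functions in $\mathcal{F}$ are unbounded under an admissible $D$. By Condition 5 of Definition~\ref{d:admissible}, $\Pr(\|x\|_2 > \alpha) \leq c_9 \exp(-\alpha/\sqrt d)$. Choosing $R = \Theta(\sqrt d \log(\ell/\delta))$ and taking a union bound over the $\ell$ samples, with probability at least $1-\delta/2$ every $x_i \in X_C$ satisfies $\|x_i\|_2 \leq R$, so $0 \leq (a\cdot x_i)^2 \leq R^2$ simultaneously for all unit $a$. The contribution of the tail $\{\|x\|_2 > R\}$ to $E[(a\cdot x)^2]$ is at most $E[\|x\|_2^2 \, \mathbf{1}_{\|x\|_2 > R}]$, which is $o(1/\ell)$ by the exponential tail, and therefore negligible.

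Third, I would apply a standard uniform convergence inequality for a function class of pseudo-dimension $O(d)$ with values in $[0,R^2]$, conditioned on the truncation event: with probability at least $1-\delta/2$, the empirical average of $(a\cdot x)^2$ over $X_C$ deviates from its truncated expectation by $O\!\left(\sqrt{d\log(\ell/\delta)(d+\log(1/\delta))/\ell}\right)$ uniformly over unit $a$. Combining with the tail bound on the expectation yields the claimed inequality. The main obstacle is avoiding a loose power of $R^2$ inside the square root: a naive Hoeffding-type argument for functions bounded by $R^2 = O(d\log^2(\ell/\delta))$ produces the wrong rate, so one instead uses a Bernstein-type refinement that exploits the fact that, for admissible distributions, $E[(a\cdot x)^4]$ is controlled, or equivalently a chaining/Rademacher bound tailored to the quadratic class. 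This is the only delicate step; everything else is bookkeeping with VC tools.
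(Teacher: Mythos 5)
Your high-level route---a pseudo-dimension bound of $O(d)$ for $\{x\mapsto(a\cdot x)^2\}$, truncation of the support at radius $R = \Theta(\sqrt{d}\log(\ell/\delta))$ using the exponential tail in Condition~5 of Definition~\ref{d:admissible}, and then a VC-style uniform convergence bound over the truncated class---is exactly the structure of the paper's proof. The paper simply cites \cite{KLS09} for the pseudo-dimension bound where you derive it from sign patterns of low-degree polynomials; both are fine.

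Where you say that ``a naive Hoeffding-type argument for functions bounded by $R^2 = O(d\log^2(\ell/\delta))$ produces the wrong rate,'' you have actually identified an imprecision in the paper itself: the paper's final step is literally to invoke Lemma~\ref{l:vc}, which \emph{is} the Hoeffding/Pollard-type bound, with range $[0,R^2]$. That yields a deviation of order $R^2\sqrt{(d+\log(1/\delta))/\ell} = O\bigl(d\log^2(\ell/\delta)\sqrt{(d+\log(1/\delta))/\ell}\bigr)$, which exceeds the displayed bound $\sqrt{O(d\log(\ell/\delta)(d+\log(1/\delta)))/\ell}$ by a factor of roughly $\sqrt{d}\,\log^{3/2}(\ell/\delta)$. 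The Bernstein/Rademacher refinement you gesture at would be needed to get the displayed rate, but the paper does not carry it out and neither do you---you leave it as a sketch. The reason this does not break anything downstream is that Lemma~\ref{lemma:all.var} is only used to show that a $\mathrm{poly}(d,1/\eta',1/\delta,1/\gamma,1/r)$ sample size makes the empirical second moment at most $2\,\E[(a\cdot x)^2]$, and the coarser Hoeffding rate is more than adequate for that; so the stated deviation in the lemma is tighter than needed and tighter than its own proof delivers. One further small point for your truncation step: after conditioning on $\|x\|_2\le R$, the VC inequality controls deviations from $\E_{D'}$, the conditional expectation, not $\E_D$; you should, as the paper implicitly does via a total-variation comparison in the proof of Lemma~\ref{l:vc.hinge}, note explicitly that $\E_{D'}$ and $\E_D$ differ by a negligible amount before concluding.
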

{\bf Proof:}  See Appendix~\ref{a:vc}.
\qed

Lemma~\ref{lemma:all.var} and
Lemma~\ref{lemma:var.small} together
directly imply that
\[
n = \mathrm{poly}\left(d, 1/\eta', 1/\delta,\frac{1}{c (r^2 + \gamma^2)}\right)
 = \mathrm{poly}\left(d, 1/\eta', 1/\delta,1/\gamma,1/r\right)
\]
suffices for it to be the case that, for all
$w \in B(u,r)$,
\[
\frac{1}{|S|} \sum_{(x,y)} q^*(x) (a \cdot x)^2
  \leq 2 \E[(a \cdot x)^2]
 \leq 2 c_4 (r^2 + \gamma^2),
\]
where $c_4$ is the value in Lemma~\ref{lemma:var.small}
corresponding
to setting $c_3 = C$.  If $c = 2 c_4$, we have that $q^*$ is feasible.

So what is left is to prove is that a separation oracle
for the convex program can be computed in polynomial time.
Very roughly, there is a linear constraint for each of a set of directions,
limiting the variance in that direction.   We can find a violated
constraint, if there is one, by finding the direction with maximum
variance, using something like PCA, but taking appropriate account of the
fact that we are only considering directions near $u$.

In detail, we may compute the separation oracle as follows.
First, it is easy to check whether,
for all $x$,
$0 \leq q(x) \leq 1$, and whether
$\sum_{x \in S} q(x) \geq (1 - \xi) |S|$.
An algorithm can first do that.  If these pass, then it needs to
check whether there is a $w \in B(u,r)$ with $|| w ||_2 \leq 1$ such that
\[
\frac{1}{|S|} \sum_{x \in S} q(x) (w\cdot x)^2
   > c (r^2 + \gamma^2).
\]
This can be done by finding $w \in B(u,r)$ with $|| w ||_2 \leq 1$ that
maximizes $\sum_{x \in S} q(x) (w\cdot x)^2$, and checking it.

Suppose $X$ is a matrix with a row for each $x \in S$, where the
row is $\sqrt{q(x)} x$.
Then $\sum_{x \in S} q(x) (w\cdot x)^2 = w^T X^T X w$,
and, maximizing this over $w$ is an equivalent problem
to minimizing $w^T (-X^T X) w$ subject to
$\Vert w-u \Vert_2 \le r$ and $|| w || \leq 1$.  Since
$-X^T X$ is symmetric, problems of this form are known to be
solvable in polynomial time \cite{SZ03} (see \cite{BM14}).

% -------------------------------------------------
\subsection{The error within a band in each iteration}
At each iteration, Algorithm~\ref{fig:active-algorithm-malicious}
concentrates its attention on examples in the band.  Our next theorem
analyzes its error on these examples.

\begin{theorem}
\label{thm:low-error-inside-band}
For $k \leq \lceil \log_M (1/\epsilon) \rceil$,
if $err_{D}(w_{k-1}) \le M^{-(k-1)}$,
with probability $1 - \frac{\delta}{k+k^2}$ (over the random examples
in round $k$),
after round $k$ of Algorithm~\ref{fig:active-algorithm-malicious},
we have $err_{D_{w_{k-1}, b_{k-1}}}(w_k) \le \kappa$.
\end{theorem}

We will prove Theorem~\ref{thm:low-error-inside-band} using a series
of lemmas below. First, we bound the hinge loss of the target $w^*$
within the band $S_{w_{k-1}, b_{k-1}}$.  Since we are analyzing a
particular round $k$, to reduce clutter in the formulas, for the rest
of this section, let us refer to
$\ell_{\tau_k}$ as $\ell$, and
$L_{\tau_k}(\cdot,D_{w_{k-1}, b_{k-1}})$ as $L(\cdot)$. First, Lemma~\ref{lemma:opt.hinge.underlying},
that $L(w^*) \leq \kappa/6$, also applies here, using exactly
the same proof.

%\begin{lemma}
%\label{lemma:opt.hinge.underlying}
%$L(w^*) \leq \kappa/6$.
%\end{lemma}
%\begin{proof}
%Notice that $y(w^*\cdot x)$ is never negative, so, on any
%clean example $(x,y)$, we have
%\[
%\ell(w^*,x,y) = \max \left\{ 0, 1 - \frac{y(w^* \cdot x)}{\tau_k} \right\}
%            \leq 1,
%\]
%and, furthermore, $w^*$ will pay a non-zero hinge only inside the region where
%$|w^*\cdot x| < \tau_k$.
%Hence,
%\[
%L(w^*) \leq \Pr_{D_{w_{k-1}, b_{k-1}}}(|w^*\cdot x| \leq \tau_k)
%= \frac {\Pr_{x\sim D} (|w^*\cdot x| \leq \tau_k \, \& \, |{w}_{k-1}\cdot x| \leq b_{k-1})} {\Pr_{x \sim D}(|{w}_{k-1}\cdot x| \leq b_{k-1})}.
%\]
%
%Using Part (d) of
%Lemma~\ref{l:ilc}, for
%the value of $c_1$ in that definition,
%we can lower bound the denominator:
%\[
%\Pr_{x \sim D}(|w_{k-1} \cdot x| < b_{k-1})
%  \ge  2 c_1 \min \{ b_{k-1}, 1/9 \}.
%\]
%% part (\ref{i:density})
%Part (c) of Lemma~\ref{l:ilc}
%also implies that the numerator is at most
%\[
%\Pr_{x \sim D} (|w^* \cdot x| \leq \tau_k) \leq 2 \tau_k.
%\]
%
%Hence, we have
%\[
%L(w^*) \leq \frac{2 \tau_k}{2 c_1 \min \{ b_{k-1}, 1/9 \}} = \kappa/6.
%\]
%\end{proof}

During round $k$ we can decompose the working set $W$ into the set of
``clean'' examples $W_C$ which are drawn from $D_{w_{k-1}, b_{k-1}}$
and the set of ``dirty'' or malicious examples $W_D$ which are output
by the adversary. We will next show that the fraction of dirty
examples in round $k$ is not too large.

\begin{lemma}
\label{lemma:few.noisy}
There is an absolute positive constant $c$ such that, with
probability $1 - \frac{\delta}{6 (k+k^2)}$,
\begin{equation}
\label{e:few.noisy}
|W_D| \leq c \eta n_k {M}^k \leq \frac{ c M \eta n_k}{\epsilon}
\end{equation}
\end{lemma}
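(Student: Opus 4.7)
The plan is to mirror Lemma~\ref{lemma:few.noisy.uniform.app} from the uniform-case analysis, replacing the $\sqrt{d}$-scaled band probability (which came from Equation~\ref{e:prband}) with the corresponding lower bound extracted from part 1 of Definition~\ref{d:admissible}. The argument is a two-stage Chernoff bound: first bound the total number of draws from $\mathrm{EX}_\eta$ needed to fill the working set $W$, then bound what fraction of those draws can be dirty.

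First I would lower bound the probability that a clean example falls in $S_{w_{k-1}, b_{k-1}}$. Recall $b_{k-1} = c_1' M^{-(k-1)}$ from Section~\ref{s:admissible.malicious.parameters}. Part 1 of Definition~\ref{d:admissible} gives, for the constants $c_1, c_2$ there, that $\Pr_{x\sim D}(|w_{k-1}\cdot x|\le b_{k-1}) \geq c_2 \min\{b_{k-1}, c_1\}$, which is at least $(c_2 c_1'/M)\, M^{-k}$ once $b_{k-1}\le c_1$ (which holds for all $k$ beyond an absolute constant, and is trivially fine otherwise since $b_{k-1}$ stays bounded). Hence each draw from $\mathrm{EX}_\eta$ is placed into $W$ with probability at least $(1-\eta)(c_2 c_1'/M)\, M^{-k}$. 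A standard multiplicative Chernoff/waiting-time bound then shows that, with probability $1 - \frac{\delta}{12(k+k^2)}$, the total number $N$ of examples drawn from $\mathrm{EX}_\eta$ in stage $k$ is at most $C_0' n_k M^k$ for an absolute constant $C_0'$, provided $n_k$ is polynomially large in $M^k, \log(k/\delta)$ (which is guaranteed by the statement of Theorem~\ref{t:admissible.malicious}).

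Given this cap on $N$, I would apply a second Chernoff bound to the indicator that a given draw is dirty: each draw is independently noisy with probability at most $\eta$, so with probability $1 - \frac{\delta}{12(k+k^2)}$ the number of noisy draws among the $N$ is at most $2\eta N$. Since a noisy example enters $W_D$ only if it is among these dirty draws, $|W_D|\le 2\eta N \le 2 C_0' \eta n_k M^k$, establishing the lemma with $C_0 = 2 C_0'$. A union bound over the two failure events gives total failure probability $\frac{\delta}{6(k+k^2)}$ as required.

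There is no real obstacle; the only care needed is to handle the edge case where $b_{k-1} > c_1$ (small $k$) separately and to ensure the Chernoff bounds are applied with sharpness parameters that let $n_k$ remain polynomial in the relevant quantities. Both are routine and follow exactly the template of Lemma~\ref{lemma:few.noisy.uniform.app}, with $(\tilde c_2 \tilde c_4\, 2^{-k})$ replaced by $(c_2 c_1'/M)\, M^{-k}$.
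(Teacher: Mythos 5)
Your proof is correct and follows essentially the same two-stage waiting-time-plus-Chernoff argument that the paper uses; you even fix a small imprecision in the paper, which cites Equation~\ref{e:prband} (the uniform-case band probability) where Part~1 of Definition~\ref{d:admissible} is what actually delivers the $\Omega(M^{-k})$ lower bound in the admissible setting. Your handling of the $b_{k-1}>c_1$ edge case and the split of the failure probability into two halves of $\frac{\delta}{12(k+k^2)}$ also matches what the paper does.
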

\begin{proof}
From Lemma~\ref{l:ilc} and the setting of our parameters, the
probability that an example falls in $S_{w_{k-1}}$ is at least
$\Omega(M^{-k})$. Therefore, with probability $(1 - \frac{\delta}{12
  (k+k^2)})$, the number of examples we must draw before we encounter
$n_k$ examples that fall within $S_{w_{k-1}, b_{k-1}}$ is at most
$O({n_k {M}^k})$.  The probability that each unlabeled example we draw
is noisy is at most $\eta$. Applying a Chernoff bound, with
probability at least $1 - \frac{\delta}{12 (k+k^2)}$, we have
$|W_D| \leq c \eta n_k M^k$.  Since $k \leq \lceil \log_M (1/\epsilon) \rceil$,
this completes the proof.
\end{proof}

Recall that the total variation distance between two probability
distributions is the maximum difference between the probabilities
that they assign to any event.

We can think of $q$ as soft indicator functions for ``keeping''
examples, and so interpret the inequality
$\sum_{x \in W} q(x) \geq (1 - \xi)|W|$ as roughly akin to saying
that most examples are kept.  This means that distribution
$p$ obtained by normalizing $q$ is close to the uniform
distribution over $W$.  We make this precise in the following lemma.
\begin{lemma}
\label{l:almost.uniform}
The total variation distance between $p$ and the uniform distribution
over $W$ is at most $\xi$.
\end{lemma}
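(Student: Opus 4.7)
The plan is to use the standard variational identity for total variation distance, which for two probability distributions $p$ and $u$ on a finite set $W$ states that $\mathrm{TV}(p,u) = 1 - \sum_{x \in W} \min\{p(x), u(x)\}$. Taking $u$ to be the uniform distribution on $W$, so $u(x) = 1/|W|$, and invoking this identity (Lemma~1 of \cite{LS06}) reduces the problem to showing the inequality $\sum_{x \in W} \min\{p(x), 1/|W|\} \geq 1 - \xi$.

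Next, I would unfold the definition $p(x) = q(x)/Z$ where $Z = \sum_{u \in W} q(u)$, and bring in the two defining constraints on $q$. The upper bound $q(u) \leq 1$ yields $Z \leq |W|$, hence $p(x) \geq q(x)/|W|$ for every $x$; combining this with a second use of $q(x) \leq 1$ gives $\min\{p(x), 1/|W|\} \geq \min\{q(x)/|W|, 1/|W|\} = q(x)/|W|$ pointwise on $W$.

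Summing this pointwise bound over $x \in W$ and applying the feasibility constraint $\sum_{x \in W} q(x) \geq (1-\xi)|W|$ produces $\sum_{x \in W} \min\{p(x), 1/|W|\} \geq 1 - \xi$, which, plugged into the variational identity, delivers the claimed $\mathrm{TV}(p,u) \leq \xi$. I do not expect any genuine obstacle: the lemma is essentially an unpacking of the variational formula together with the two defining properties of $q$, and the only point worth flagging is that the bound $q(x) \leq 1$ must be used twice — once to control the normalizer $Z$ from above, and once to simplify the minimum on the right-hand side.
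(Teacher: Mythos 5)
Your proof is correct and follows essentially the same route as the paper: invoke the variational identity (Lemma~1 of \cite{LS06}), use $q(u)\leq 1$ to bound the normalizer $Z=\sum_u q(u)\leq |W|$, use $q(x)\leq 1$ again to collapse the minimum to $q(x)/|W|$, and finish with the constraint $\sum_x q(x)\geq (1-\xi)|W|$. The paper writes these same steps as a chain of inequalities on $\rho$ rather than isolating a pointwise bound on $\min\{p(x),1/|W|\}$, but the argument is identical.
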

\begin{proof}
Lemma~1 of \cite{LS06} implies that the total variation distance
$\rho$ between $p$ and the uniform distribution over $W$ satisfies
\[
\rho
= 1 - \sum_{x \in W} \min \left\{ p(x), \frac{1}{|W|} \right\}
= 1 - \sum_{x \in W} \min \left\{ \frac{q(x)}{\sum_{u \in W} q(u)},
                                    \frac{1}{|W|} \right\}.
\]
Since $q(u) \leq 1$ for all $u$, we have
$\sum_{u \in W} q(u) \leq |W|$, so that
\[
\rho \leq 1 - \frac{1}{|W|} \sum_{x \in W} \min \{ q (x),  1 \}.
\]
Again, since $q(x) \leq 1$, we have
\[
\rho \leq 1 - \frac{(1 - \xi) |W|}{|W|}  = \xi.
\]
\end{proof}

Next, we will relate the average hinge loss when examples
are weighted according to $p$, i.e., $\ell(w,p)$ to the
hinge loss averaged over clean examples $W_C$, i.e., $\ell(w,W_C)$.
Here $\ell(w,W_C)$ and $\ell(w,p)$ are defined with respect to the
unrevealed labels that the adversary has committed to.
\begin{lemma}
\label{lemma:clean.vs.filtered.lc}
There are absolute constants $C_1$, $C_2$ and $C_3$ such that,
with probability $1 - \frac{\delta}{2 (k+k^2)}$,
if we define $z_k = \sqrt{r_k^2 + {b^2_{k-1}}}$, then
for any $w \in B({w}_{k-1},r_k)$, we have
\begin{equation}
\label{e:clean.by.filtered}
\ell(w, W_C)
\leq \ell(w,p)
        + \frac{C_1 \eta}{\epsilon}
    \left(1 + \frac{z_k}{\tau_k}
                        \right) + \kappa/32
\end{equation}
and
\begin{equation}
\label{e:filtered.by.clean}
\ell(w,p)
  \leq 2 \ell(w,W_C)
        + \kappa/32
        +  \frac{C_2 \eta}{\epsilon}
        + C_3 \sqrt{\frac{\eta}{\epsilon}} \times \frac{z_k}{\tau_k}.
\end{equation}
\end{lemma}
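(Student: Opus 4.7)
The plan is to mimic the argument that established Lemma~\ref{lemma:clean.vs.filtered.uniform.app} in the uniform case, replacing the two variance bounds of magnitude $z_k^2$ by their admissible analogues of magnitude $\ln^{\lambda}(1+1/b_{k-1}) \cdot z_k^2$. Concretely, I would first invoke Theorem~\ref{thm:lopchop-admissible} (applied with $u=w_{k-1}$, $\gamma=b_{k-1}$, $r=r_k$, and the choice of $\sigma_k^2$ from Section~\ref{s:admissible.malicious.parameters}), together with Lemma~\ref{lemma:few.noisy}, so that with probability $1 - \delta/(4(k+k^2))$ one has $|W_D|\le C_0 \eta n_k M^k$ and, for every $w \in B(w_{k-1},r_k)$,
\[
\frac{1}{|W|}\sum_{x\in W} q(x)(w\cdot x)^2 \le c\,\ln^{\lambda}(1+1/b_{k-1})(r_k^2+b_{k-1}^2).
\]
Next, combining Lemma~\ref{lemma:all.var} with part 4 of Definition~\ref{d:admissible}, a $\mathrm{poly}(d,1/\epsilon,\log(1/\delta))$ sample size suffices to get, with probability $1-\delta/(4(k+k^2))$, the analogous bound on clean examples:
\[
\frac{1}{|W_C|}\sum_{x\in W_C}(w\cdot x)^2 \le 2c_8\,\ln^{\lambda}(1+1/b_{k-1})(r_k^2+b_{k-1}^2),
\]
uniformly in $w \in B(w_{k-1},r_k)$. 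Together these yield a weighted and unweighted variance of order $\ln^{\lambda}(1+1/b_{k-1}) z_k^2$, which is the only place in the uniform proof where the distribution shape entered quantitatively.

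Once these two variance estimates are in hand, I would follow the uniform argument step by step. First, Lemma~\ref{l:almost.uniform} (whose proof goes through verbatim) shows that $p$ and the uniform distribution on $W$ are within total variation $\xi_k$. Then the Cauchy--Schwarz computation in the uniform proof gives
\[
\sum_{(x,y)\in W_D} p(x)\,\ell(w,x,y) \le \Pr_p(W_D) + \frac{2\sqrt{\Pr_p(W_D)}\,\ln^{\lambda/2}(1+1/b_{k-1})z_k}{\tau_k},
\]
and Lemma~\ref{l:almost.uniform} plus (\ref{e:few.noisy}) bound $\Pr_p(W_D)$ by $O(\eta/\epsilon) + \xi_k$. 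The same Cauchy--Schwarz trick bounds $\sum_x p(x)\ell(w,x,y)$ by $1 + O(\ln^{\lambda/2}(1+1/b_{k-1})z_k/\tau_k)$.

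To derive (\ref{e:clean.by.filtered}) one writes $\ell(w,W_C)$ as the sum over $W_C$ of $q(x)\ell(w,x,y)$ plus a correction $(1-q(x))\ell(w,x,y)$, bounds the correction by the variance inequality on $W_C$ via Cauchy--Schwarz (and our choice of $\xi_k$ absorbs the $\kappa/32$ term), and then passes from a sum weighted by $q$ to one weighted by $p$ using $\sum q(x) \in [(1-\xi_k)|W|, |W|]$; the resulting discrepancy is $|W_D|/|W_C| \cdot (1 + O(\ln^{\lambda/2}(1+1/b_{k-1}) z_k/\tau_k))$, which is $O(\eta/\epsilon)(1+\ln^{\lambda/2}(1+1/b_{k-1})z_k/\tau_k)$ by (\ref{e:few.noisy}). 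Inequality (\ref{e:filtered.by.clean}) comes from splitting $\ell(w,p)$ over $W_C$ and $W_D$, bounding the $W_D$ part by the displayed Cauchy--Schwarz estimate, and using $\sum_{x\in W_C} q(x) \ge |W_C| - \xi_k|W|$ to pass from a $q$-weighted average over $W_C$ to $2\ell(w,W_C)$.

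The only genuinely new obstacle compared to the uniform case is bookkeeping the extra $\ln^{\lambda/2}(1+1/b_{k-1})$ factor that appears everywhere variance is used. Our choice $\xi_k \le \kappa^2\tau_k^2/(2^{10} z_k^2 \ln^{\lambda}(1+1/b_{k-1}))$ is designed precisely so that the terms of the form $\sqrt{\xi_k}\ln^{\lambda/2}(1+1/b_{k-1}) z_k/\tau_k$ are absorbed into $\kappa/32$; verifying this is the main computational care required. All sampling errors are controlled by VC-type bounds (as in Lemma~\ref{l:vc.hinge.uniform.app}) at cost $\delta/(4(k+k^2))$ each, so a union bound yields the stated $1-\delta/(2(k+k^2))$ success probability.
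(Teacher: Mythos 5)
Your proposal follows essentially the same approach as the paper's proof: invoke Theorem~\ref{thm:lopchop-admissible}, Lemma~\ref{lemma:few.noisy}, part~4 of Definition~\ref{d:admissible}, and Lemma~\ref{lemma:all.var} to obtain the $q$-weighted and clean-sample second-moment bounds with the extra $\ln^{\lambda}(1+1/b_{k-1})$ factor, then repeat the Cauchy--Schwarz decomposition from the uniform case (bound $\sum_{W_D}p\,\ell$, bound $\sum_W p\,\ell$, and split $\ell(w,W_C)$ and $\ell(w,p)$), with $\xi_k$ chosen so that all the $\sqrt{\xi_k}\ln^{\lambda/2}(1+1/b_{k-1})z_k/\tau_k$ residues are absorbed into $\kappa/32$. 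This is exactly how the paper's proof is organized, so no substantive difference.
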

\begin{proof}
Assume without loss of generality that each element $(x,y) \in W$ is distinct. Fix an arbitrary $w \in B({w}_{k-1},r_k)$.
By
% the guarantee of
Theorem~\ref{thm:lopchop},
Lemma~\ref{lemma:few.noisy},
Lemma~\ref{l:ilc},
Lemma~\ref{lemma:var.small}, and
Lemma~\ref{lemma:all.var}, we know that, with probability
$1 - \frac{\delta}{2(k+k^2)}$, there are absolute constants
$K_1$, $K_2$ and $K_3$ such that
% \begin{equation}
% \label{e:tvar.q}
% \frac{1}{|W|}
%   \sum_{x \in W} q(x) (w \cdot x)^2 \leq K_1 z^2_k,
% \end{equation}
% \begin{equation}
% \label{e:few.noisy.lc}
% |W_D| \leq K_2 \eta n_k {M}^{-k}
% \end{equation}
% \begin{equation}
% \label{e:clean.var}
% \frac{1}{|W_C|} \sum_{(x,y) \in W_C} (w \cdot x)^2
%      \leq K_3 z^2,
% \end{equation}
\begin{eqnarray}
\label{e:tvar.q}
\frac{1}{|W|}
  \sum_{x \in W} q(x) (w \cdot x)^2 & \leq & K_1 z^2_k \\
\label{e:few.noisy.lc}
|W_D| & \leq & \frac{K_2 \eta n_k}{\epsilon} \\
\label{e:clean.var}
\frac{1}{|W_C|} \sum_{(x,y) \in W_C} (w \cdot x)^2
     & \leq & K_3 z_k^2.
\end{eqnarray}
(We will need the value of $K_3$ later: we may use
\begin{equation}
\label{e:K_3}
K_3 = 2 c_4
\end{equation}
for the value of $c_4$ in Lemma~\ref{lemma:var.small} corresponding
to $c_3 = b_0$.)

Assume that (\ref{e:tvar.q}), (\ref{e:few.noisy.lc}) and
(\ref{e:clean.var}) all hold.

Since $\sum_{x \in W} q(x) \geq (1 - \xi_k) |W| \geq |W|/2$,
we have that (\ref{e:tvar.q}) implies
\begin{equation}
\label{e:tvar}
\sum_{x \in W} p(x) (w \cdot x)^2 \leq {2 K_1}z^2_k.
\end{equation}

First, let us bound the weighted loss on noisy examples in the training
set. In particular, we will show that
 \begin{equation}
 \label{e:noisy.loss}
  \sum_{(x,y) \in W_D} p(x) \ell(w,x,y) \le
    K_2 \eta/\epsilon  + \xi_k
      + \sqrt{2 K_1 K_2 \eta/\epsilon + \xi_k} \left(\frac{z_k}{\tau_k}\right).
 \end{equation}

To see this, notice that,
\begin{align*}
& \sum_{(x,y) \in W_D} p(x) \ell(w,x,y)
= \sum_{(x,y) \in W_D} p(x) \max \left\{ 0, 1 - \frac{y (w \cdot x)}{\tau_k} \right\} \\
&  \leq \Pr_p(W_D)
      + \frac{1}{\tau_k} \sum_{(x,y) \in W_D} p(x) |w \cdot x|
  = \Pr_p(W_D)
      + \frac{1}{\tau_k} \sum_{(x,y) \in W} p(x) 1_{W_D}(x,y) |w \cdot x| \\
&  \leq \Pr_p(W_D)
      + \frac{1}{\tau_k} \sqrt{ \sum_{(x,y) \in W} p(x) 1_{W_D}(x,y)}
                         \sqrt{ \sum_{(x,y) \in W} p(x) (w \cdot x)^2}
  \;\;\mbox{(by the Cauchy-Schwarz inequality)}
\\
&  \leq \Pr_p(W_D)
      + \sqrt{2 K_1 \Pr_p(W_D)} \left( \frac{ z_k}{\tau_k} \right)
%%  \;\;\mbox{(by (\ref{e:tvar}))} \\
 \leq \frac{K_2 \eta}{\epsilon} + \xi_k
      + \sqrt{2 K_1 K_2 \eta/\epsilon + \xi_k} \left( \frac{ z_k}{\tau_k} \right)
 \end{align*}
where the second to last inequality follows by (\ref{e:tvar}) and the last one by Lemma~\ref{l:almost.uniform} and (\ref{e:few.noisy.lc}).

Similarly, we will show that
\begin{equation}
\label{e:total.loss}
\sum_{(x,y) \in W} p(x) \ell(w,x,y) \le 1
   + \sqrt{2 K_1} \left( \frac{ z_k}{\tau_k} \right).
\end{equation}
To see this notice that,
\begin{align*}
& \sum_{(x,y) \in W} p(x) \ell(w,x,y)
 = \sum_{(x,y) \in W} p(x) \max \left\{ 0,
                             1 - \frac{y (w \cdot x)}{\tau_k} \right\} \\
& \leq 1
      + \frac{1}{\tau_k} \sum_{(x,y) \in W} p(x) |w \cdot x|
 \leq 1
      + \frac{1}{\tau_k}
       \sqrt{\sum_{(x,y) \in W} p(x) (w \cdot x)^2} \\
& \leq 1
 + \sqrt{2 K_1} \left( \frac{ z_k}{\tau_k} \right),
 \end{align*}
by (\ref{e:tvar}).

Next, we have
\begin{align*}
\ell(w,W_C)
 & = \frac{1}{|W_C|} \left( \sum_{(x,y) \in W} q(x) \ell(w,x,y)
           + (1_{W_C} (x,y) - q(x)) \ell(w,x,y) \right) \\
 & \leq \frac{1}{|W_C|} \left( \sum_{(x,y) \in W} q(x) \ell(w,x,y)
           + \sum_{(x,y) \in W_C} (1 - q(x)) \ell(w,x,y) \right) \\
 & \leq \frac{1}{|W_C|} \left( \sum_{(x,y) \in W} q(x) \ell(w,x,y)
  + \sum_{(x,y) \in W_C} (1 - q(x))
                        \left(1 + \frac{|w \cdot x|}{\tau_k} \right)
                \right) \\
 & \leq \frac{1}{|W_C|} \left( \sum_{(x,y) \in W} q(x) \ell(w,x,y)
  + \xi_k |W|
  + \frac{1}{\tau_k} \sum_{(x,y) \in W_C} (1 - q(x)) |w \cdot x| \right) \\
 & \leq \frac{1}{|W_C|} \left( \sum_{(x,y) \in W} q(x) \ell(w,x,y)
  + \xi_k |W|
  + \frac{1}{\tau_k} \sqrt{ \sum_{(x,y) \in W_C} (1 - q(x))^2}
                     \sqrt{ \sum_{(x,y) \in W_C} (w \cdot x)^2 } \right)
\end{align*}
by the Cauchy-Schwarz inequality.  Recall that $0 \leq q(x) \leq 1$,
and $\sum_{(x,y) \in W} q(x) \geq 1 - \xi_k |W|$.  Thus,
\begin{align*}
\ell(w,W_C)
 & \leq \frac{1}{|W_C|} \left( \sum_{(x,y) \in W} q(x) \ell(w,x,y)
  + \xi_k |W|
  + \frac{1}{\tau_k} \sqrt{ \xi_k |W| }
                     \sqrt{ \sum_{(x,y) \in W_C} (w \cdot x)^2 }\right) \\
 & \leq \frac{1}{|W_C|} \left( \sum_{(x,y) \in W} q(x) \ell(w,x,y)
  + \xi_k |W|
  + \sqrt{ \xi_k |W| |W_C| K_3 } \left( \frac{ z_k}
         {\tau_k} \right) \right)
\end{align*}
by (\ref{e:clean.var}).  Since $|W_C| \geq |W|/2$, we have
\begin{align*}
\ell(w,W_C)
 & \leq \frac{1}{|W_C|} \left( \sum_{(x,y) \in W} q(x) \ell(w,x,y) \right)
  + 2 \xi_k + \sqrt{ 2 \xi_k K_3 } \left( \frac{z_k}{\tau_k} \right). \\
\end{align*}
We have chosen $\xi_k$ small enough that
\begin{align*}
\ell(w,W_C)
 & \leq \frac{1}{|W_C|} \left( \sum_{(x,y) \in W} q(x) \ell(w,x,y) \right)
         + \kappa/32 \\
% need to extract out ell(w,p)
 & = \frac{\sum_{(x,y) \in W} q(x)}{|W_C|}
          \left( \sum_{(x,y) \in W} p(x) \ell(w,x,y) \right)
         + \kappa/32 \\
 & = \ell(w,p) +
  \left( \frac{\sum_{(x,y) \in W} q(x)}{|W_C|} - 1 \right)
                    \left( \sum_{(x,y) \in W} p(x) \ell(w,x,y) \right)
         + \kappa/32 \\
 & \leq \ell(w,p) +
  \left( \frac{|W|}{|W_C|} - 1 \right)
           \left( \sum_{(x,y) \in W} p(x) \ell(w,x,y) \right)
         + \kappa/32 \\
 & \leq \ell(w,p) +
  \left( \frac{|W|}{|W_C|} - 1 \right)
   \left(1 + \sqrt{2 K_1} \left( \frac{ z_k}{\tau_k} \right) \right)
         + \kappa/32, \\
\end{align*}
by (\ref{e:total.loss}).
Applying (\ref{e:few.noisy.lc}) yields (\ref{e:clean.by.filtered}).

Also,
\begin{align*}
\ell(w,p) & = \sum_{(x,y) \in W} p(x) \ell(w,x,y) \\
& = \sum_{(x,y) \in W_C} p(x) \ell(w,x,y)
     + \sum_{(x,y) \in W_D} p(x) \ell(w,x,y) \\
& \leq \sum_{(x,y) \in W_C} p(x) \ell(w,x,y)
       +  K_2 \eta / \epsilon  + \xi_k
      + \sqrt{2 K_1 K_2 \eta / \epsilon + \xi_k} \left( \frac{ z_k}{\tau_k} \right)
         \mbox{\hspace{0.1in} (by (\ref{e:noisy.loss}))}. \\
& = \frac{\sum_{(x,y) \in W_C} q(x) \ell(w,x,y)}{\sum_{(x,y) \in W_C} q(x)}
       +  K_2 \eta / \epsilon  + \xi_k
      + \sqrt{2 K_1 K_2 \eta / \epsilon + \xi_k} \left( \frac{ z_k}{\tau_k} \right)
    \\
& \leq \frac{\sum_{(x,y) \in W_C} \ell(w,x,y)}{\sum_{(x,y) \in W_C} q(x)}
       +  K_2 \eta / \epsilon  + \xi_k
      + \sqrt{2 K_1 K_2 \eta / \epsilon + \xi_k} \left( \frac{ z_k}{\tau_k} \right)
   \mbox{\hspace{0.1in} (since $\forall x, q(x) \leq 1$))}. \\
& \leq \frac{\sum_{(x,y) \in W_C} \ell(w,x,y)}{|W_C| - \xi |W|}
       +  K_2 \eta / \epsilon  + \xi_k
      + \sqrt{2 K_1 K_2 \eta / \epsilon + \xi_k}
              \left( \frac{ z_k}{\tau_k} \right) \\
& \leq 2 \ell(w,W_C)
       +  K_2 \eta / \epsilon  + \xi_k
      + \sqrt{2 K_1 K_2 \eta / \epsilon + \xi_k} \left( \frac{ z_k}{\tau_k} \right),
\end{align*}
by (\ref{e:few.noisy.lc}), which in turn implies (\ref{e:filtered.by.clean}).
\end{proof}

\begin{proof}[of Theorem~\ref{thm:low-error-inside-band}]
Exploiting the fact that, with high probability,
$\ell(w,x,y) = O\left(\sqrt{d \log \left( \frac{d}{\epsilon \delta} \right)}\right)$ for
all $(x,y) \in S_{w_{k-1},b_{k-1}}$ and $w \in B(w_{k-1},r_k)$
as in the proof of Lemma~\ref{l:vc.hinge},
with probability $1 - \frac{\delta}{2 (k + k^2)}$,
for all $w \in B(w_{k-1}, r_k)$,
\begin{equation}
\label{e:W_C.est}
|L(w) - \ell(w,W_C)| \leq \kappa/32
\end{equation}
and
\begin{equation}
\label{e:T.est}
|\ell(w,p) - \ell(w,T)| \leq \kappa/32.
\end{equation}
Also with probability $1-\frac{\delta}{2(k+k^2)}$, both
(\ref{e:clean.by.filtered}) and (\ref{e:filtered.by.clean}) hold.
Let us assume from here on that all of these hold.

Then we have
\begin{eqnarray*}
\err_{D_{w_{k-1}, b_{k-1}}}({w_k}) & = & \err_{D_{w_{k-1}, b_{k-1}}}({v_k}) \\
  & \leq & L({v_k})
        \mbox{\hspace{0.1in}(since for each error, the hinge loss is at least $1$)} \\
  & \leq & \ell({v_k},W_C) + \kappa/16
        \mbox{\hspace{0.1in}(by (\ref{e:W_C.est}))} \\
  & \leq & \ell({v_k},p)
   + \frac{C_1 \eta}{\epsilon} \left(1 + \frac{z_k}{\tau_k}
                        \right)
   + \kappa/8
        \mbox{\hspace{0.1in}(by (\ref{e:clean.by.filtered}))} \\
  & \leq & \ell({v_k},T)
   + \frac{C_1 \eta}{\epsilon} \left(1 + \frac{z_k}{\tau_k}
                        \right)
   + \kappa/4
        \mbox{\hspace{0.1in}(by (\ref{e:T.est}))} \\
  & \leq & \ell(w^*,T)
   + \frac{C_1 \eta}{\epsilon} \left(1 + \frac{z_k}{\tau_k}
                        \right)
   + \kappa/4
    \mbox{\hspace{0.1in}(since $w^* \in B({w}_{k-1},r_k)$)} \\
   & \leq & \ell(w^*,p)
   + \frac{C_1 \eta}{\epsilon} \left(1 + \frac{z_k}{\tau_k}
                        \right)
   + \kappa/3
    \mbox{\hspace{0.1in}(by (\ref{e:T.est}))}.
\end{eqnarray*}

This, together with (\ref{e:filtered.by.clean}) and (\ref{e:W_C.est}), gives
\begin{eqnarray*}
\err_{D_{w_{k-1}, b_{k-1}}}({w_k})
  & \leq & 2 \ell(w^*,W_C)
        +  \frac{C_2 \eta}{\epsilon}
        + C_3 \sqrt{\frac{\eta}{\epsilon}} \times \frac{ z_k}{\tau_k} \\
  & & \hspace{0.5in}
     + \frac{C_1 \eta}{\epsilon} \left(1 + \frac{z_k}
                                           {\tau_k} \right)
   + 2 \kappa/5 \\
  & \leq & 2 L(w^*)
        +  \frac{C_2 \eta}{\epsilon}
        + C_3 \sqrt{\frac{\eta}{\epsilon}} \times \frac{ z_k}{\tau_k}
   + \frac{C_1 \eta}{\epsilon} \left(1 + \frac{z_k}
                                           {\tau_k} \right)
   + \kappa/2 \\
  & \leq & \kappa/3
        +  \frac{C_2 \eta}{\epsilon}
        + C_3 \sqrt{\frac{\eta}{\epsilon}} \times \frac{ z_k}{\tau_k}
   + \frac{C_1 \eta}{\epsilon} \left(1 + \frac{z_k}{\tau_k} \right)
   + \kappa/2,
\end{eqnarray*}
by Lemma~\ref{lemma:opt.hinge.underlying}.

Now notice that $z_k/\tau_k$ is $\Theta(1)$.
Hence an $\Omega(\epsilon)$ bound on
$\eta$ suffices to imply
that $\err_{D_{w_{k-1}, b_{k-1}}}({w_k}) \le \kappa$ with probability
$(1-\frac{\delta}{k+k^2})$.
\end{proof}
The rest of the analysis is exactly the same as for
the case of adversarial label noise.

\section{Discussion}
We note that the idea of localization in the concept
space is traditionally used in statistical learning theory both in
supervised and active learning for getting sharper
rates~\cite{bbl05,BLL09,Kol10}. Furthermore, the idea of localization
in the instance space has been used in margin-based analysis of active
learning~\cite{Balcan07,BL13}. In this work we used localization in
both senses in order to get polynomial-time algorithms with better
noise tolerance. It would be interesting to further exploit this idea
for other concept spaces.

Our algorithms run in polynomial time, and therefore use a polynomial
number of examples.  Notably, they use only polylogarithmically many
class labels.  Our bounds on the total number of examples used by our
algorithms are, however, somewhat worse than the best bounds known for
the noise-free case.  In order to find and remove outliers, the
precision with which we need statistics on the training data to match
properties of the underlying distribution gets finer as the number of
variables increases.  When combined with the usual effect in VC
analyses regarding growth of the richness of behavior with the number
of variables (which could be partially mitigated using localized
analysis in place of the VC tools that we have used here), this leads
to the increased requirement on the number of examples.  Substantially
improving the sample complexity and finding more computationally
efficient noise-tolerant algorithms is a potentially useful topic for
future research.

While we have chosen to focus on isotropic log-concave distributions
to present our techniques in a clean setting, it appears that, using
tools from \cite{BL13,ABL14}, our analysis can be applied to a broader
class of distributions with minor changes, including ``nearly
log-concave distributions'', defined as in \cite{AK91}.  One property
of the distribution that is needed for our analysis is that it is
fairly likely that a random example falls fairly close to the
separating hyperplane of the target.  While this may not be the case in some
applications, such applications are typically easier, and might be
handled separately.  Provably noise-tolerant learning of linear
classifiers for natural classes of distributions that include such
cases is another important topic for future work.

\subsection*{Acknowledgments} We thank Steve Hanneke for helpful communications.  We also thank
anonymous reviewers for their helpful comments.
This work was supported in part by NSF grants CCF-0953192,
CCF-1101283, and CCF-1422910, AFOSR grant FA9550-09-1-0538, ONR grant N00014-09-1-0751,
and a Microsoft Research Faculty Fellowship.

\bibliographystyle{ACM-Reference-Format-Journals}
\bibliography{paper-al}

\appendix

\section{Additional Related Work}%
\label{se:related}
\paragraph{Passive Learning}
Blum et al. \cite{BlumFKV:97} considered noise-tolerant learning of
halfspaces under a more idealized noise model, known as the random noise model, in which the label of
each example is flipped with a certain probability, independently of
the feature vector.
Some other, less closely related, work on efficient noise-tolerant
learning of halfspaces includes \cite{Byl94,BlumFKV:97,FGK+:06,GR09,Ser01,ABS10,LS11,BS12}.

\paragraph{Active Learning}
%Most related to our work is the work of~\cite{Balcan07} analyzing the sample complexity of a natural and widely used~\cite{TonKol01} margin based active learning procedure (where in each round we focus our attention only on points that fall near the hypothesized decision boundary).
%In this work we  show how, surprisingly, the more aggressive localization technique (of setting the margins as small as possible while still guaranteeing correctness and good progress) that was proven beneficial for providing procedures with good sample complexity in the noise-free settings or under stylised noise models (Tsybakov noise model)~\cite{Balcan07}\footnote{We note that the localization considered by~\cite{Balcan07} is  a more aggressive one than those considered in disagreement based active learning literature~\cite{Balcan06,Hanneke07,Kol10,hanneke:11,wang11} and earlier in passive learning~\cite{BBM05,bbl05,Zha06}.}, can be used to obtain polynomial-time algorithms with much better malicious noise tolerance than known before even for passive learning. Not only do we use this localization idea for different purposes, but our analysis significantly departs from~\cite{Balcan07}.
%To obtain our results, we exploit another type of localization (as mentioned above we develop and analyze a  novel localized outlier removal procedure), as well as the performance of the rescaled hinge loss minimization in smaller and smaller bands.
As we have mentioned, most prior theoretical work on active learning focuses on either sample complexity bounds (without regard for efficiency) or on providing polynomial time algorithms in the noiseless case or under simple noise models (random classification noise~\cite{BF13} or linear noise~\cite{CaCEGe10,dgs12}).

In~\cite{CaCEGe10,dgs12} online learning algorithms in the selective sampling framework are presented, where labels must
be actively queried before they are revealed. Under the assumption that the label conditional distribution is a linear function determined by a fixed target vector, they provide bounds on the regret of the algorithm and on
the number of labels it queries when faced with an adaptive adversarial strategy of generating the
instances. As pointed out in \cite{dgs12}, these results can also be converted to a distributional PAC setting where instances $x_t$ are drawn i.i.d. In this setting they obtain exponential improvement in label complexity over passive learning. These interesting results and techniques are not directly comparable to ours.
% Our framework is not restricted to halfspaces.
One important difference is that (as pointed out in~\cite{GSS12}) the exponential improvement they give is not possible in the noiseless version of their setting. In other words, the addition of linear noise defined by the target makes the problem easier for active sampling. By contrast RCN can only make the classification task harder than in the realizable case.

Recently,~\cite{BF13} showed the first polynomial time algorithms for actively learning thresholds, balanced rectangles, and homogenous linear separators under log-concave distributions in the presence of random classification noise. %%% be efficiently actively learned via active statistical algorithms.
Active learning with respect to isotropic log-concave distributions in
the absence of noise was studied in \cite{BL13}.

An algorithm for active learning with a general hypothesis space was
proposed and analyzed by Zhang and Chaudhuri \citeyear{ZC14}.
Efficient algorithms for
tracking a drifting linear classifier when the distribution is uniform
were described by Hanneke, Kanade and Yang \citeyear{HKL15}.

\section{Acute initialization}%Proof of Lemma~\ref{l:w0}}
\label{a:w0}

We will prove that we may assume without loss of generality that
the algorithm receives as input a $w_0$ whose angle with the target $w^*$ is acute.

Suppose we have an algorithm $B$ as a subroutine that satisfies the
guarantee of Theorem~\ref{t:adversarial.detailed}, given access to such
a $w_0$.  Then we can arrive at an algorithm $A$ which works without
it as follows.  With probability $1$, for a random $u$, either $u$ or
$-u$ has an acute angle with $w^*$.  We may then run $B$ with both
choices, and with $\epsilon$ set to $\frac{\pi c_2}{4}$, where $c_2$
is the constant in Part (e) of Lemma~\ref{l:ilc}.  Then we can use
hypothesis testing on $O(\log(1/\delta))$ examples, and, with high
probability, find a hypothesis $w'$ with error less than $\frac{\pi
c_2}{4}$.
% Part \ref{i:angle}
Part (e) of Lemma~\ref{l:ilc} then implies
that $A$ may then set $w_0 = w'$, and call $B$ again.

\section{Relating Adversarial Label Noise and the Agnostic Setting}
\label{a:agnostic}
In this section we study the agnostic setting of~\cite{KearnsSS94,KKMS08} and describe how our results imply constant factor approximations in that model. In the agnostic model, data $(x,y)$ is generated from a distribution $D$ over $\Re^d \times \{1,-1\}$. For a given concept class $C$, let $OPT$ be the error of the best classifier in $C$. In other words, $OPT = \mathrm{argmin}_{f \in C} err_D(f) = \mathrm{argmin}_{f \in C} Pr_{(x,y) \sim D} [f(x) \ne y]$. The goal of the learning algorithm is to output a hypothesis $h$ which is nearly as good as $f$, i.e., given $\epsilon > 0$, we want $err_D(h) \leq c \cdot OPT + \epsilon$, where $c$ is the approximation factor.
Any result in the adversarial model that we study, translates into a result for the agnostic setting via the following lemma.
\begin{lemma}
\label{lem:adversarial-to-agnostic}
For a given concept class $C$ and distribution $D$, if there exists an algorithm in the {\em adversarial} noise model which runs in time $\poly(d,1/\epsilon)$ and tolerates a noise rate of $\eta = \Omega(\epsilon)$, then there exists an algorithm for $(C,D)$ in the agnostic setting which runs in time $\poly(d,1/\epsilon)$ and achieves error $O(OPT + \epsilon)$.
\end{lemma}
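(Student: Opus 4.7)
The plan is to exhibit a direct reduction from the agnostic learning problem to the adversarial label noise model. The key observation is that the definition of adversarial label noise in Section~\ref{s:label} only requires the existence of some $w^{*} \in C$ with $\Pr_{(x,y)\sim P}[\sign(w^{*}\cdot x) \ne y] \le \eta$, while leaving the joint distribution $P$ otherwise arbitrary subject to having the correct marginal $D$. This is exactly the agnostic setting with $OPT \le \eta$. Consequently, \emph{if} $OPT$ were known, we could invoke the hypothesized adversarial-noise algorithm with noise parameter $\eta := \max(OPT, c\epsilon)$ and accuracy parameter $\epsilon' := \eta/c$, where $c$ is the absolute constant hidden in the $\eta = \Omega(\epsilon)$ guarantee, producing in $\poly(d,1/\epsilon)$ time a hypothesis $h$ with $err_D(h) \le \epsilon' = O(OPT+\epsilon)$.

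Since $OPT$ is not known in advance, I would handle the unknown noise rate via a standard guess-and-check (doubling) argument. First, for each $j = 0, 1, \ldots, \lceil \log_2(1/\epsilon)\rceil$, set a candidate noise level $\eta_j = 2^{-j}$, a matching target accuracy $\epsilon_j := \max(\epsilon, \eta_j/c)$, and run the adversarial-noise algorithm with these parameters and confidence $\delta/(2\lceil\log_2(1/\epsilon)\rceil)$, obtaining a hypothesis $h_j$. Since there are only $O(\log(1/\epsilon))$ guesses, the total runtime remains $\poly(d,1/\epsilon)$. Second, draw a fresh holdout sample from $D$ of size $O((1/\epsilon^2)\log(\log(1/\epsilon)/\delta))$, and output the $h_j$ that minimizes empirical error on this sample.

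For the analysis, let $j^{*}$ be the smallest index for which $\eta_{j^{*}} \ge OPT$, or $j^{*} := \lceil\log_2(1/\epsilon)\rceil$ if $OPT < \epsilon$; in either case $\eta_{j^{*}} \le \max(2\cdot OPT, \epsilon)$. For this index, the true joint distribution literally satisfies the adversarial-noise hypothesis at rate $\eta_{j^{*}}$, so the assumed algorithm's guarantee yields $err_D(h_{j^{*}}) \le \epsilon_{j^{*}} = O(OPT + \epsilon)$ with probability $1-\delta/2$. By a standard Hoeffding/union-bound argument applied to the $O(\log(1/\epsilon))$ candidate hypotheses and the holdout sample, with probability $1-\delta/2$ every empirical error estimate is within $\epsilon$ of the true error; hence the hypothesis selected by the holdout has error at most $err_D(h_{j^{*}}) + 2\epsilon = O(OPT + \epsilon)$, as claimed.

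There is no significant technical obstacle here; the proof is essentially bookkeeping. The only mild subtlety is ensuring that the sequence of guesses $\eta_j$ is fine enough that one of them overshoots $OPT$ by at most a constant factor while not overshooting the regime $\eta_j = \Omega(\epsilon)$ where the adversarial algorithm is applicable, which is why the minimum with $\epsilon$ is taken in the definition of $\epsilon_j$ and why the doubling sequence is truncated at $j = \lceil \log_2(1/\epsilon)\rceil$.
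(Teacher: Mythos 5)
Your core reduction --- feed the agnostic data to the adversarial-label-noise learner, taking $w^*$ to be the optimal concept --- is exactly the paper's argument. You go a bit further in one respect: since $OPT$ is unknown, you wrap a standard doubling/validate-on-holdout loop around the learner to find a good noise-rate guess, whereas the paper's proof silently assumes $OPT$ is known when it sets $\epsilon' = c(OPT+\epsilon)$; your version is the more careful one at that point. Two small fixes are needed. First, $j^*$ should be the \emph{largest} index with $\eta_{j^*} \ge OPT$: since $\eta_j = 2^{-j}$ is decreasing in $j$, the smallest such index is always $j=0$ and yields no useful bound. Second, the intermediate claim that ``the assumed algorithm's guarantee yields $err_D(h_{j^*}) \le \epsilon_{j^*}$'' elides a step: the adversarial-noise guarantee (e.g.\ Theorem~\ref{thm:uniform-agnostic}) bounds $\Pr_{x\sim D}[\sgn(h\cdot x)\ne\sgn(w^*\cdot x)]$, the disagreement with the best-fitting concept $w^*$, not the label error $err_D(h)$; one must then apply the triangle inequality $err_D(h) \le err_D(w^*) + \Pr_{x\sim D}[\sgn(h\cdot x)\ne\sgn(w^*\cdot x)] \le OPT + \epsilon_{j^*}$, which is the step the paper writes out explicitly. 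Neither issue affects the final $O(OPT+\epsilon)$ conclusion.
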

\begin{proof}
Let $f^*$ be the optimal halfspace with error $OPT$. In the adversarial setting, w.r.t. $f^*$, the noise rate $\eta$ will be exactly $OPT$. Set $\epsilon' = c (OPT + \epsilon)$ as input to the algorithm for the adversarial model. By the guarantee of the algorithm we will get a hypothesis $h$ such that $Pr_{(x,y) \sim D} [h(x) \ne f^*(x)] \leq \epsilon' = c(OPT + \epsilon)$. Hence by triangle inequality, we have $err_D(h) \leq err_D(f^*) + c(OPT + \epsilon) = O(OPT + \epsilon)$.
\end{proof}
For the case when $C$ is the class of origin centered halfspaces in $R^d$ and the marginal of $D$ is the uniform distribution over $S_{d-1}$, the above lemma along with Theorem~\ref{thm:log-concave-agnostic} implies that we can output a halfspace of accuracy $O(OPT+\epsilon)$ in time $\poly(d,1/ \epsilon)$. The work of~\cite{KKMS08} achieves a guarantee of $O(OPT + \epsilon)$ in time exponential in $1/\epsilon$ by doing $L_2$ regression to learn a low degree polynomial, and that $L_1$ regression can achieve a stronger guarantee of $OPT + \epsilon$.
As noted above, their approach also does not require that the
halfspace to be learned goes through the origin.
\section{Proof of VC lemmas}
\label{a:vc}

In this section, we apply some standard VC tools to establish
some lemmas about estimates of expectations.

\begin{definition}
Say that a set $F$ of real-valued functions with a common domain
$X$ {\em shatters} $x_1,...,x_d \in X$ if there are thresholds
$t_1,...,t_d$ such that
\[
\{ (\sign(f(x_1) - t_1),...,\sign(f(x_d) - t_d)): f \in F \}
  = \{ -1, 1 \}^d.
\]
The {\em pseudo-dimension} of F is the size of the largest
set shattered by $F$.
\end{definition}

We will use the following bound.
\begin{lemma}[see \cite{AB99}]
\label{l:vc}
Let $F$ be a set of functions from a common domain $X$ to
$[a,b]$ and let $d$ be the pseudo-dimension of $F$, and let
$D$ be a probability distribution over $X$.  Then,
for $m = O\left(\frac{(b-a)^2}{\alpha^2} (d + \log(1/\delta))\right)$,
if $x_1,...,x_m$ are drawn independently at random according to
$D$, with probability $1 - \delta$, for all $f \in F$,
\[
\left| \E_{x \sim D}(f(x)) - \frac{1}{m} \sum_{t=1}^m f(x_t) \right|
  \leq \alpha.
\]
\end{lemma}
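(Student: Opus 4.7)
The plan is to establish this via the classical symmetrization and discretization approach that is standard for bounding uniform deviations of real-valued function classes in terms of pseudo-dimension. First I would symmetrize: introduce a ghost sample $x_1',\dots,x_m'$ drawn i.i.d.\ from $D$, and show that it suffices to bound, uniformly over $f \in F$, the quantity $|\frac{1}{m}\sum_t f(x_t) - \frac{1}{m}\sum_t f(x_t')|$. This is a standard step (if the empirical mean on the first sample deviates from the true mean by more than $\alpha$, then with constant probability the empirical mean on the ghost sample is within $\alpha/2$ of the true mean, by Chebyshev applied to the variance which is at most $(b-a)^2/(4m)$; pick $m$ large enough that this constant is at least $1/2$).

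Next I would reduce from real-valued functions to indicators via the layer-cake representation: for any function $f$ with range in $[a,b]$ one can write
\[
f(x) - a = \int_0^{b-a} \mathbf{1}[f(x) - a \geq t]\,dt,
\]
so the deviation between empirical averages of $f$ on two samples is bounded by $(b-a)$ times the supremum over $t$ of the deviation of the empirical averages of the indicator $\mathbf{1}[f(x) \geq a+t]$. Therefore it suffices to show that, for the class $G := \{\mathbf{1}[f(\cdot) \geq s] : f \in F, s \in \mathbb{R}\}$ of threshold indicators, the empirical averages on the two samples are uniformly close. By the definition of pseudo-dimension, $G$ is a class of $\{0,1\}$-valued functions whose VC dimension is at most $d$ (any set shattered by $G$ would be shattered by $F$ with appropriate thresholds). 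Hence Sauer's lemma gives that, on the combined sample of $2m$ points, $G$ induces at most $(2em/d)^d$ distinct labelings.

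Now I would apply the standard permutation argument: condition on the combined sample, and use a random swap of coordinates between the two halves; for each of the $(2em/d)^d$ fixed labelings, Hoeffding's inequality on the swap signs controls the probability that the difference of empirical means exceeds $\alpha/(2(b-a))$ by $2\exp(-m\alpha^2/(8(b-a)^2))$. A union bound over the labelings then gives failure probability at most $2 (2em/d)^d \exp(-m\alpha^2/(8(b-a)^2))$, which is at most $\delta$ once $m = \Omega((b-a)^2/\alpha^2 \cdot (d + \log(1/\delta)))$, matching the claim. The main obstacle, such as it is, is the first reduction from $F$ to $G$: one must carefully verify that bounding pseudo-dimension of $F$ yields VC dimension $\leq d$ for the full threshold class (including arbitrary thresholds $s$, not just the thresholds $t_1,\dots,t_d$ used in the definition). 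This follows because a shattering of $k$ points by $G$ produces, for each sign pattern, some $(f,s)$ achieving it; collecting the chosen thresholds $s$ shows $F$ shatters those same points, so $k \le d$. Everything else is bookkeeping of constants.
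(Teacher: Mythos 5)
The paper itself does not prove Lemma~\ref{l:vc}; it cites it from \cite{AB99}, where the argument runs through covering numbers for real-valued classes (Haussler's packing lemma) rather than through a reduction to a VC class of indicator functions. Your route is therefore a different one, and it has a genuine gap in the reduction step: you claim that the threshold class $G = \{\mathbf{1}[f(\cdot)\geq s]: f\in F, s\in\mathbb{R}\}$ has VC dimension at most the pseudo-dimension $d$ of $F$, arguing that one can ``collect the chosen thresholds $s$'' from a $G$-shattering to produce a pseudo-shattering of $F$. In a $G$-shattering of $x_1,\dots,x_k$ the scalar $s=s_\sigma$ varies with the sign pattern $\sigma\in\{0,1\}^k$ (so there are up to $2^k$ of them), whereas a pseudo-shattering requires a single fixed tuple $t_1,\dots,t_k$ of per-point thresholds against which all $2^k$ patterns are realized. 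These are not the same data and the conversion does not exist in general; in fact the claim is false. Take $F=\{x\mapsto w\cdot x: w\in\mathbb{R}^d\}$, a $d$-dimensional vector space of functions with pseudo-dimension exactly $d$; then $G$ is the class of affine-halfspace indicators, whose VC dimension is $d+1$. The object whose VC dimension equals $\mathrm{Pdim}(F)$ by definition is the subgraph class $\{(x,t)\mapsto\mathbf{1}[f(x)\geq t]: f\in F\}$ on the product domain $X\times\mathbb{R}$, not the class $G$ on $X$, and it is the former that feeds into the standard packing-number argument.

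To repair the proof you would need either to bound $\mathrm{VCdim}(G)$ by a correct argument (one can plausibly show it is $O(d)$, but not by ``collecting thresholds'') or to follow the covering-number route of the cited reference. As a secondary point, a direct Sauer-plus-permutation argument of the kind you sketch yields $m = O\bigl(\frac{(b-a)^2}{\alpha^2}\bigl(d\log\frac{b-a}{\alpha} + \log\frac{1}{\delta}\bigr)\bigr)$, one logarithmic factor worse than the bound in the statement; removing that factor requires chaining or a multi-scale cover. Since the paper only invokes this lemma with slack to spare, the log factor is a minor issue, but the VC-dimension claim for $G$ is a substantive error in the proposal.
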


\subsection{Proof of Lemma~\ref{l:vc.hinge}}
\label{a:vc.hinge}

The pseudo-dimension of the set of linear combinations of $d$
variables is known to be $d$ \cite{Pol84}. Since, for any
non-increasing function $\psi: \R \rightarrow \R$ and any
$F$, the pseudo-dimension of $\{ \psi \circ f: f \in F \}$
is at most that of $F$ (see \cite{Pol84}), the pseudo-dimension of
$\{ \ell(w,\cdot): w \in \R^d \}$ is at most $d$.

Now, to apply Lemma~\ref{l:vc}, we want an upper bound on the loss.
The first step is a bound in terms of the norm.
\begin{lemma}
\label{l:worst}
There is a constant $c$ such that,
for any $w \in B({w}_{k-1},r_k)$, and all $x$,
\[
\ell(w,x,y) \leq c (1 + || x ||_2).
\]
\end{lemma}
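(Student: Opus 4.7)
The plan is a direct calculation from the definition of the hinge loss, followed by a norm inequality and the triangle inequality. Starting from $\ell(w,x,y) = \ell_{\tau_k}(w,x,y) = \max\!\left(0, 1 - \frac{y(w\cdot x)}{\tau_k}\right)$, and noting that the $\max$ with $0$ is attained only when the inner expression is nonnegative, we immediately get the pointwise bound $\ell(w,x,y) \leq 1 + \frac{|w \cdot x|}{\tau_k}$, regardless of $y \in \{-1,+1\}$.

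Next I would apply Cauchy--Schwarz to bound $|w \cdot x| \leq \|w\|_2 \cdot \|x\|_2$, and then bound $\|w\|_2$ using the assumption that $w \in B(w_{k-1}, r_k)$: since $w_{k-1}$ is a unit vector, the triangle inequality gives $\|w\|_2 \leq \|w_{k-1}\|_2 + r_k = 1 + r_k$. Combining, $\ell(w,x,y) \leq 1 + \frac{(1+r_k)}{\tau_k} \|x\|_2$. Setting $c = \max\!\bigl\{1, (1+r_k)/\tau_k\bigr\}$ we obtain $\ell(w,x,y) \leq c(1 + \|x\|_2)$, as claimed.

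The one thing still to verify is that this $c$ is indeed bounded by an absolute constant under the parameter choices of Section~\ref{s:admissible.malicious.parameters} (which is the setting in which Lemma~\ref{l:vc.hinge} applies Lemma~\ref{l:worst}). There, $r_k = \min\{M^{-(k-1)}/c_6, \pi/2\}$ and $\tau_k = c_2 \min\{b_{k-1}, c_1\}\kappa/(6c_3)$ with $b_{k-1} = c_1' M^{-(k-1)}$, so for all $k$ both $r_k$ and $b_{k-1}$ scale proportionally to $M^{-k}$ (truncated from above by absolute constants). Hence $(1+r_k)/\tau_k$ is uniformly bounded in $k$ and $d$ by a quantity depending only on $M, c_1, c_1', c_2, c_3, c_6, \kappa$, all of which are absolute constants.

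There is no real obstacle here; the only thing to watch is to keep the constant $c$ independent of $k$ and $d$ by exploiting the fact that the algorithm scales $\tau_k$ and $r_k$ together. (In the uniform case from Appendix~\ref{a:uniform-app}, the analogous bound in Lemma~\ref{l:vc.hinge.uniform.app} only needs that $\ell$ is bounded on the relevant examples, for which we additionally use $\|x\|_2 = 1$ WLOG so $c(1+\|x\|_2) \leq 2c$ suffices.)
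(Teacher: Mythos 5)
There is a genuine gap: your constant $c = \max\{1, (1+r_k)/\tau_k\}$ is \emph{not} an absolute constant. You argue that $r_k$ and $b_{k-1}$ both scale like $M^{-k}$, which is true, but the numerator of your bound is $1 + r_k$, not $r_k$, and $1 + r_k = \Theta(1)$ for all $k$. Since $\tau_k = \Theta(M^{-k})$ (in the admissible parameterization of Section~\ref{s:admissible.malicious.parameters}, $\tau_k = c_2 \min\{b_{k-1},c_1\}\kappa/(6c_3)$ with $b_{k-1} = c_1' M^{-(k-1)}$), the ratio $(1+r_k)/\tau_k = \Theta(M^{k})$ grows without bound in $k$. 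In the uniform case it is even worse, scaling like $2^k\sqrt{d}$. So the Cauchy--Schwarz step $|w\cdot x|\le\|w\|_2\|x\|_2$ is too crude: it produces a factor $\|w\|_2\approx 1$ in the numerator, which is not commensurate with the shrinking $\tau_k$.

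The paper's proof avoids this by exploiting the band condition on $x$: it writes $w\cdot x = w_{k-1}\cdot x + (w-w_{k-1})\cdot x$, bounds $|w_{k-1}\cdot x|\le b_{k-1}$ (the lemma is applied only to examples drawn from $D_{w_{k-1},b_{k-1}}$, i.e.\ $x$ in the band, so ``all $x$'' in the statement should be read with that implicit restriction), and bounds $|(w-w_{k-1})\cdot x|\le r_k\|x\|_2$. This gives $\ell(w,x,y)\le 1 + (b_{k-1} + r_k\|x\|_2)/\tau_k$, and now \emph{both} $b_{k-1}/\tau_k$ and $r_k/\tau_k$ are $O(1)$ absolute constants because $b_{k-1}$, $r_k$, and $\tau_k$ all scale as $M^{-k}$. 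To repair your argument you need this decomposition (or some equivalent use of $|w_{k-1}\cdot x|\le b_{k-1}$); without the localization of both $w$ and $x$ around $w_{k-1}$ there is no way to get a $k$- and $d$-independent constant.
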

\begin{proof}
\begin{align*}
\ell(w,x,y) & \leq 1 + \frac{|w \cdot x|}{\tau_k}
 \leq 1 + \frac{|w_{k-1} \cdot x| + \Vert w-w_{k-1} \Vert_2 || x ||_2}{\tau_k} \\
& \leq 1 + \frac{b_{k-1} + r_k || x ||_2}{\tau_k}
 = 1 + \frac{c_1' {M}^{-k} + \min \{ {M}^{-(k-1)}/c_6, \pi/2 \}
                           || x ||_2}{\frac{c_2 \min \{ c_1' {M}^{-k}, c_1 \} \kappa}{6 c_3}}.
\end{align*}
\end{proof}

If the support of $D$ is bounded, Lemma~\ref{l:worst} gives a useful
worst-case bound on the loss.  Next, we give a high-probability bound
that holds for all isotropic log-concave distributions.
\begin{lemma}
\label{l:x.small.highprob}
For an absolute constant $c$,
with probability $1 - \frac{\delta}{6 (k + k^2)}$,
\begin{equation}
\label{e:ballbound}
\max_{x \in W_C} || x ||_2 \leq c \sqrt{d} \ln \left( \frac{|W_C| k}{\delta} \right).
\end{equation}
\end{lemma}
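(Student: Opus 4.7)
The plan is to combine the tail bound on $\|x\|$ under $D$ given by Part~5 of Definition~\ref{d:admissible} with the observation that $W_C$ consists of examples drawn from the conditional distribution $D_{w_{k-1},b_{k-1}}$, and then take a union bound over the elements of $W_C$.

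First, I would convert the tail bound from $D$ to a tail bound under $D_{w_{k-1},b_{k-1}}$. For any $\alpha > \sqrt{d}$,
\[
\Pr_{x \sim D_{w_{k-1},b_{k-1}}}(\|x\|_2 > \alpha)
 = \frac{\Pr_{x \sim D}(\|x\|_2 > \alpha \text{ and } |w_{k-1} \cdot x| \leq b_{k-1})}
        {\Pr_{x \sim D}(|w_{k-1} \cdot x| \leq b_{k-1})}
 \leq \frac{c_9 \exp(-\alpha/\sqrt{d})}{c_2 \min\{b_{k-1}, c_1\}},
\]
where the numerator is bounded using Part~5 and the denominator is bounded from below using Part~1 of Definition~\ref{d:admissible}. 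Recall that $b_{k-1} = c_1' M^{-(k-1)}$, so $1/b_{k-1}$ is only exponential in $k$ and $\ln(1/b_{k-1}) = O(k)$.

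Next, I would apply a union bound over the (at most $|W_C|$) clean examples in the working set: the probability that any of them has norm exceeding $\alpha$ is at most
\[
\frac{c_9 |W_C| \exp(-\alpha/\sqrt{d})}{c_2 \min\{b_{k-1}, c_1\}}.
\]
Setting $\alpha = c\sqrt{d}\ln(|W_C|k/\delta)$ for a sufficiently large absolute constant $c$ makes this expression at most $\delta/(6(k+k^2))$, since the extra logarithmic factor $\ln(1/b_{k-1}) = O(k)$ and $\ln(1/(k+k^2)) = O(\log k)$ are both absorbed by the choice of constant $c$ in front of $\ln(|W_C|k/\delta)$, using the fact that $|W_C|$ is polynomially large in $k$ (so $\ln|W_C|$ dominates $\ln k$). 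This yields the desired bound (\ref{e:ballbound}).

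There is no real obstacle here; the only thing to be careful about is that $b_{k-1}$ shrinks with $k$, so one must verify that $\ln(1/b_{k-1})$ is dominated by $\ln(|W_C|k/\delta)$ up to the constant $c$. Since by the sampling requirements $|W_C|$ grows at least as some polynomial in $M^k$, this is immediate.
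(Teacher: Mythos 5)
Your proposal is correct and takes the same essential approach as the paper: combine the admissibility tail bound (Part~5 of Definition~\ref{d:admissible}) with a union bound over $W_C$, then choose $\alpha=c\sqrt{d}\ln(|W_C|k/\delta)$ for a large enough constant. The only difference is that you are more careful about the fact that elements of $W_C$ are drawn from the band-conditioned distribution $D_{w_{k-1},b_{k-1}}$ rather than from $D$ itself, so you pick up a $1/\Pr_{x\sim D}(|w_{k-1}\cdot x|\le b_{k-1}) = \Theta(1/b_{k-1})$ factor in the tail bound and then argue that $\ln(1/b_{k-1}) = O(k)$ is absorbed by $c\ln(|W_C|k/\delta)$ since $\ln |W_C| = \Omega(k)$; the paper states the unconditional bound applied directly to $W_C$ and simply sets $\alpha=\sqrt{d}\ln(12 c_9 |W_C| k^2/\delta)$, glossing over the conditioning (which can be justified either your way, or by observing that $W_C$ is a subset of i.i.d.\ draws from $D$ and union-bounding over that slightly larger set — either reading introduces an $O(k)$ additive term inside the log that gets absorbed identically). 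Your extra step closes a small gap in rigor but does not change the argument.
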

\begin{proof}
% Applying
% % part \ref{i:ball}
% part 5 of Definition~\ref{d:admissible},
Applying Part (a) of Lemma~\ref{l:ilc}
together with a union bound, we have
\[
\Pr( \exists x \in W_C,\; || x || > \alpha)
\leq c_9 | W_C | \exp(-\alpha/\sqrt{d}),
\]
and $\alpha = \sqrt{d} \ln \left( \frac{12 c_9 |W_C| k^2}{\delta} \right)$
makes the RHS at most $\frac{\delta}{6 (k + k^2)}$.
\end{proof}

Let $D'$ be the distribution obtained by conditioning $D$ on
the event that $|| x || < R$,
where $R$ is the RHS of (\ref{e:ballbound}).
% (In the case of the uniform distribution, just let $D'$ be the original
% distribution.)
% For $\ell \leq n_k$,
% the total
% variation distance between the joint distribution of
% $\ell$ draws from $D'$ and $\ell$ draws from $D$ is at
% most $1 - \frac{\delta}{6 (k + k^2)}$,
By Lemma~\ref{l:x.small.highprob}, the total variation distance between
drawing the members of $W_C$ independently random from $D$, and drawing
them from $D'$, it at most $1 - \frac{\delta}{6 (k + k^2)}$,
so it suffices to prove
(\ref{e:vc.labelnoise}) with respect
to $D'$.
% ~((\ref{e:W_C.est.uniform.app}) and (\ref{e:T.est.uniform.app})
% respectively for the uniform distribution).
Applying Lemma~\ref{l:worst},
and Lemma~\ref{l:vc}
then completes the proof of (\ref{e:vc.labelnoise}).

\subsection{Proof of Lemma~\protect\ref{lemma:all.var}}
Define $f_a$ by $f_a(x) = (a \cdot x)^2$.  The
pseudo-dimension of the set of all such functions is $O(d)$
\cite{KLS09}.  As the proof of Lemma~\ref{l:vc.hinge},
w.l.o.g., all $x$ have
$|| x ||_2 \leq O(\sqrt{d} \log(\ell/\delta))$,
and applying Lemma~\ref{l:vc} completes the proof.

%%%%%%%%%%%%%%%%%%%%%%%%%%%%%%%%%%%%%%%%%%%%%%%%%%%%%%%%%%%%%%%%%%%%%%%%%%%%%%%%%%%%%%%%%%%%%%%%%%%%5
\end{document}